\documentclass[letterpaper]{article} 
\usepackage{aaai2026}  
\usepackage{times}  
\usepackage{helvet}  
\usepackage{courier}  
\usepackage[hyphens]{url}  
\usepackage{placeins}
\usepackage{graphicx} 
\usepackage{natbib}  
\usepackage{caption} 
\usepackage{algorithm}
\usepackage{algorithmic}
\usepackage{subfig}
\usepackage{newfloat}
\usepackage{listings}
\DeclareCaptionStyle{ruled}{labelfont=normalfont,labelsep=colon,strut=off} 
\floatstyle{ruled}
\newfloat{listing}{tb}{lst}{}
\floatname{listing}{Listing}
\usepackage{graphicx}
\graphicspath{{images/}}

\usepackage{cite}
\usepackage{graphicx}
\usepackage{epstopdf}
\usepackage{algorithm,algorithmic}
\usepackage{booktabs}
\usepackage{tabularx}
\usepackage{makecell}

\usepackage{amssymb}
\usepackage{amsmath}
\usepackage{amsthm}
\newtheorem{lemma}{Lemma}[section]
\newtheorem{rem}{CustomTheorem}[section]
\newtheorem{Remark}[rem]{Remark}
\newtheorem{defi}{CustomTheorem}
\newtheorem{Definition}[defi]{Definition}
\title{MDND: Unsupervised Learning Guided by Non-Differentiable Refinement for
Shape Correspondence}
\author{
 Qinsong Li\textsuperscript{\rm 1}\thanks{Equal contribution.}, 
 Jing Meng\textsuperscript{\rm 2}\footnotemark[1],
 Haibo Wang\textsuperscript{\rm 2}, 
 Shengjun Liu\textsuperscript{\rm 2}\thanks{Corresponding author.}
}
\affiliations{
 \textsuperscript{\rm 1}Big Data Institute, Central South University, Changsha, Hunan 410083, China\\
    \textsuperscript{\rm 2}School of Mathematics and Statistics, Central South University, Changsha, Hunan 410083, China\\
    qinsli.cg@csu.edu.cn, wykqhmj1112@163.com, wang\_haibo2017@163.com, shjliu.cg@csu.edu.cn
}

\usepackage{bibentry}

\begin{document}

\maketitle

\begin{abstract}
	Deep functional map frameworks (DFM) for shape correspondence are powerful, yet fundamentally limited by their reliance on end-to-end differentiability. This constraint prevents the integration of highly accurate, non-differentiable refinement techniques, capping their overall performance, especially on challenging non-isometric shapes. To overcome this, we introduce MDND, a novel DFM paradigm built on the principle of merging differentiable and non-differentiable components. Our framework facilitates unsupervised learning guided by an internal, non-differentiable refinement. Specifically, MDND employs a dual-branch architecture: a non-differentiable refinement branch leverages a novel, multiscale iterative solver to produce highly robust correspondences, acting as a refined target. Concurrently, a fully differentiable branch learns to predict correspondences from features. The entire system is trained end-to-end without supervision by enforcing a consistency loss that compels the differentiable branch to learn from the superior, refined results of the non-differentiable branch. Extensive experiments show that MDND sets a new state-of-the-art, demonstrating remarkable robustness on shapes with non-isometric deformations and topological noise. 
    
\end{abstract}


\begin{links}
    \link{Code}{https://github.com/AMAWDBAC/MDND}
 
\end{links}

\section{Introduction}
Establishing correspondences between non-rigid 3D shapes is a fundamental problem in computer vision and graphics, with broad applications in texture transfer \cite{dinh2005texture}, shape interpolation \cite{Ezuz2019}, statistical shape analysis \cite{bogo2014faust}, and animation \cite{sumner2004deformation}. Despite decades of research, the problem remains highly challenging, especially when shapes undergo severe non-isometric deformations or contain substantial topological noise.

\begin{figure}[h!t]
	\begin{centering}
		\includegraphics[width=1.0\linewidth]{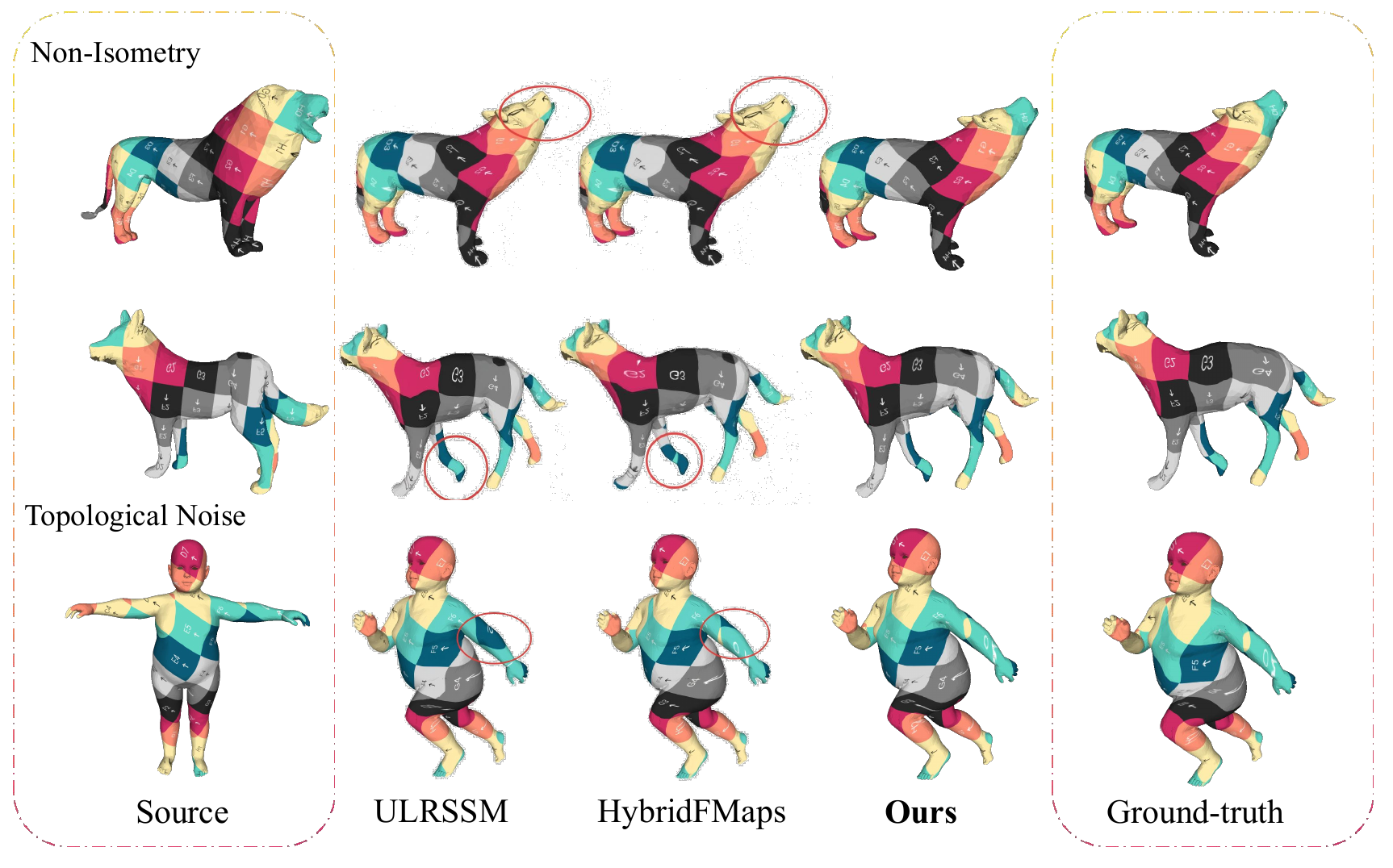}
	\end{centering}
	\caption{Qualitative comparison on challenging shapes. We visualize correspondences via texture transfer, comparing our method against state-of-the-art approaches like ULRSSM~\cite{Cao2023Unsupervised} and HybridFMaps~\cite{bastian2024hybrid}. Our method produces noticeably more accurate and coherent maps in challenging scenarios involving non-isometric deformations (top rows) and significant topological noise (bottom row).}
	\label{shouyetu}
\end{figure}
The functional map framework has emerged as a dominant paradigm for non-rigid shape correspondence, revolutionizing the field by recasting the problem from matching points to matching functions between shapes \cite{Ovsjanikov2012, Pai2021, liu2022incremental}. This approach represents the correspondence compactly as a small matrix, the functional map, in a spectral basis typically composed of the eigenfunctions of the Laplace-Beltrami Operator (LBO). The advent of deep learning has given rise to deep functional map (DFM) pipelines \cite{litany2017deep, Donati2020, donati2022deep, vigano2025nam}, which learn optimized feature descriptors directly from data to improve matching accuracy. Over time, the field has converged on a powerful, yet highly structured, standard architecture. These pipelines typically employ a Siamese-style network to extract pointwise features, which are then fed into a differentiable layer to solve for the functional map matrix. A key innovation was the introduction of a second, parallel branch that promotes consistency between the functional map and a soft pointwise map derived from feature similarity \cite{Cao2023Unsupervised,luo2025deep}. This two-branch architecture, which enforces consistency between the functional and spatial domains, has become the de facto standard for achieving state-of-the-art results in both supervised and unsupervised settings.

This convergence on a standard pipeline, however, exposes a fundamental and shared limitation: an implicit reliance on \textit{end-to-end differentiability}. This constraint is a significant bottleneck, as it precludes the integration of powerful axiomatic refinement methods. These methods \cite{Melzi2019, eisenberger2020smooth, Ling2021, li2024deformable}, often formulated as iterative optimization procedures, can produce highly accurate correspondences from a reasonable initialization. However, they frequently incorporate complex, non-differentiable operations such as nearest-neighbor searches or discrete optimization steps \cite{Ren2021}, making them incompatible with gradient-based training. Recent efforts to create \textit{differentiable} versions of these axiomatic algorithms have been met with significant trade-offs. Such approaches often require storing and differentiating through large, dense soft-correspondence matrices, leading to quadratic memory complexity that is infeasible for high-resolution meshes \cite{Eisenberger2020a, Li2022LearningMF, hu2023rfmnet}. Furthermore, differentiating through the linear system solvers inherent to many DFM pipelines is known to be numerically unstable \cite{Donati2020}. This suggests a flawed premise in the current research trajectory: by forcing the axiomatic method to become differentiable, its original robustness and elegance are often compromised in favor of a fragile, inefficient, and incomplete approximation that fits the deep learning mold.

In this work, we challenge the necessity of end-to-end differentiability and propose a novel DFM paradigm, termed MDND (\textbf{M}erging \textbf{D}ifferentiable and \textbf{N}on-\textbf{D}ifferentiable components). Instead of forcing axiomatic methods into a differentiable framework, we leverage their full, uncompromised power by treating them as non-differentiable supervisory oracles. In our proposed framework, a deep network first predicts an initial correspondence. This map is then refined by a powerful, off-the-shelf axiomatic algorithm, which may contain non-differentiable steps. The resulting high-quality map from this oracle is then used as a pseudo-ground-truth target. A consistency loss between the network's initial prediction and the oracle's refined output is backpropagated through the feature-learning network. This process effectively teaches the network to generate features that produce better initializations—ones that the axiomatic method can readily refine to a high-accuracy solution, thereby dramatically improving learning efficiency and final matching precision.
A powerful learning framework requires an equally powerful oracle. However, many efficient and popular axiomatic refiners, such as ZoomOut \cite{Melzi2019} or MWP \cite{Ling2021}, are built upon the LBO eigenbasis. While the LBO basis is intrinsically defined and thus robust to isometries, it fundamentally struggles to characterize the high-frequency, extrinsic details like bending and creasing that define non-isometric deformations. This makes LBO-based refiners inherently ill-suited for the most challenging matching problems. Inspired by recent work showing that a hybrid basis \cite{bastian2024hybrid}—combining the intrinsic LBO basis with an extrinsic basis derived from an elastic thin-shell energy (ELA) \cite{hartwig2023elastic}—is far more expressive for non-isometric shapes, we propose a new iterative refinement by generalizing the principles of state-of-the-art refiners MWP \cite{Ling2021} to this more powerful hybrid basis. Our refiner is theoretically grounded, efficient, and highly robust to severe non-isometric deformations and topological artifacts. In summary, the main contributions of this work include:
\begin{itemize}
	\item We propose \textbf{MDND}, the first approach to integrate \textit{non-differentiable iterative refinement} into the deep functional map framework, aiming to enhance feature learning and matching accuracy.
	
	\item We introduce an effective refinement with theoretical justification, and seamlessly incorporate it as a \textit{supervisory oracle} within the MDND framework, significantly improving robustness in challenging scenarios. 
	
	\item Extensive experiments across a wide range of challenging conditions demonstrate that our method sets a new state-of-the-art, particularly in cases involving substantial non-isometric deformations and topological noise.
\end{itemize}

\section{Related Work}
\label{sec:formatting}
Shape correspondence is a central topic in geometric processing, and we refer the reader to recent surveys for a comprehensive overview \cite{Sahillioglu2020, deng2022survey}. Our work builds upon three key pillars of research: axiomatic functional maps, deep functional map methods, and the development of expressive spectral bases.

\textbf{Axiomatic Functional Maps and Refinement.} The functional map framework, introduced by Ovsjanikov et al. \cite{Ovsjanikov2012}, provides an elegant algebraic representation of correspondences. This foundational work sparked a wave of axiomatic (i.e., non-learning-based) methods aimed at improving map quality. These approaches typically focus on designing sophisticated energy functions to enforce desirable properties, such as orientation preservation \cite{Ren2018, donati2022complex}, consistency with geometric wavelets \cite{Ling2021, liu2022incremental}, or multi-shape consistency \cite{huang2020, Gao2021}. To optimize these energies, powerful iterative refinement strategies have become standard practice. Methods like ZoomOut \cite{Melzi2019}, MWP \cite{Ling2021}, and Smoothshells \cite{eisenberger2020smooth} iteratively alternate between solving for a functional map and updating a pointwise correspondence. While these axiomatic techniques can achieve high accuracy, their performance is fundamentally limited by their reliance on hand-crafted feature descriptors \cite{Aubry2011, Sun2009, Salti2014, liu2024awedd} and spectral bases of LBO, which often fail in the presence of strong non-isometric deformations.

\textbf{Deep Functional Maps.} To mitigate the reliance on hand-crafted features, the field has shifted towards Deep Functional Maps (DFM). FMNet \cite{litany2017deep} was the first to learn feature descriptors for functional maps in a supervised manner. Unsupervised learning was subsequently introduced, using losses based on geodesic distances \cite{halimi2019unsupervised} or structural properties of the functional map \cite{roufosse2019unsupervised}. The DFM pipeline has progressively matured with architectural innovations. GeomFmaps \cite{Donati2020} introduced a differentiable regularized map solver, DUOFMNet \cite{donati2022deep}, which learned orientation-aware features using complex functional maps \cite{donati2022complex}. AttentiveFMaps \cite{Li2022LearningMF} employed spectral attention to handle varying resolutions. A significant breakthrough came with the introduction of dual-branch architectures \cite{Cao2023Unsupervised, cao2024revisiting, sun2023spatially, luo2025deep}, which enforce consistency between the functional map domain and the pointwise spatial domain. However, a common thread unites these advanced pipelines: their complete reliance on end-to-end differentiability prevents them from incorporating the powerful, non-differentiable solvers developed in the axiomatic literature. Compared to the above works, which only optimize descriptors, recent research has attempted to use generative models to directly learn functional mappings simultaneously \cite{zhuravlev2025denoising, emery2025}. However, this work requires training on large labeled datasets and is less effective than the former.

\textbf{Intrinsic-Extrinsic Bases in Functional Maps.} 
\label{ie}The functional map framework is built upon spectral bases. The eigenvectors of  LBO are the conventional choice due to their intrinsic nature, which provides robustness to isometric deformations. However, this very property is a limitation in non-isometric settings, where crucial extrinsic information (e.g., bending and creases) is lost. To address this, recent work has explored more expressive bases. Hartwig et al. \cite{hartwig2023elastic} introduced an extrinsic basis derived from the Hessian of a thin-shell elastic energy (the ELA-basis), which is highly sensitive to such fine-grained details. Building on this, HybridFMaps \cite{bastian2024hybrid} demonstrated that a hybrid spectral space combining the intrinsic LBO basis and the extrinsic ELA basis is significantly more expressive for non-isometric shapes. Our work is motivated by these advancements, and we leverage a hybrid basis to construct a refinement oracle that is robust to the challenging deformations where purely intrinsic methods fail.

\section{Background}
In this section, we provide an overview of the background knowledge related to several key modules integral to our approach.
\subsection{Functional Map} 
\label{FM}
Let $T: \mathcal{M} \to \mathcal{N}$ be a pointwise map from shape $\mathcal{M}$ to shape $\mathcal{N}$. The induced functional map $T_F: \mathcal{L}^2 (\mathcal{N}) \to \mathcal{L}^2 (\mathcal{M})$ transforms square-integrable real-valued functions from $\mathcal{N}$ to $\mathcal{M}$. Specifically, for any function $f_\mathcal{N} \in \mathcal{L}^2 (\mathcal{N})$, the corresponding function $f_\mathcal{M} \in \mathcal{L}^2 (\mathcal{M})$ is defined by the composition $f_\mathcal{M} = T_F(f_\mathcal{N}) = f_\mathcal{N} \circ T$. Assuming that $\mathcal{L}^2 (\mathcal{M})$ and $\mathcal{L}^2 (\mathcal{N})$ are equipped with basis functions $\{\phi_i^{\mathcal{M}}\}_{i\geq1}$ and $\{\phi_j^{\mathcal{N}}\}_{j\geq1}$, respectively, the functional map can be represented as a matrix $\mathbf{C} = (c_{ij})$, where $c_{ij} = \langle T_F(\phi_j^\mathcal{N}), \phi_i^\mathcal{M} \rangle$. Each element of $\mathbf{C}$ captures the relationship between the two sets of basis functions.

In the discrete setting, shapes $\mathcal{M}$ and $\mathcal{N}$ are typically represented as triangular meshes with $m$ and $n$ vertices, respectively. The pointwise map $T$ is denoted by $\Pi_{\mathcal{MN}}\in \mathbb{R}^{m \times n}$, where $\Pi_{\mathcal{MN}}(i,j)=1$ if $T(i)=j$, and $0$ otherwise. Here, $i$ and $j$ represent vertex indices on shape $\mathcal{M}$ and $\mathcal{N}$, respectively. And we regard it as a proper pointwise map. Let $\Phi^{\mathrm{LBO}}_\mathcal{M} \in \mathbb{R}^{m \times k}$ and $\Phi^{\mathrm{LBO}}_\mathcal{N} \in \mathbb{R}^{n \times k}$ denote the matrices containing the first $k$ discretized Laplacian eigenfunctions for each shape. The functional map $\mathbf{C^{\mathrm{LBO}}_{\mathcal{NM}}}$ is given by the projection of $\Pi_{\mathcal{MN}}$ onto the corresponding functional basis:
\begin{equation}\label{eq:P2C}
	\mathbf{C^{\mathrm{LBO}}_{\mathcal{NM}}}=(\Phi^{\mathrm{LBO}}_{\mathcal{M}})^{\dagger}\Pi_{\mathcal{MN}}\Phi^{\mathrm{LBO}}_{\mathcal{N}},
\end{equation}
where $\dagger$ denotes the Moore-Penrose pseudo-inverse. Since the functional map $\mathbf{C^{\mathrm{LBO}}_{\mathcal{NM}}}$ in Eq.\eqref{eq:P2C} arises from a pointwise correspondece, we call it \textit{proper functional map}~\cite{Ren2021}.

When the pointwise map (and, by extension, the functional map) is unknown, the functional map can be computed by solving the following optimization problem: 
\begin{equation} \label{eq:C}
	\mathbf{C^{\mathrm{LBO}}_{\mathcal{NM}}} = \underset{\mathbf{C}}{\arg\min} E_{\mathrm{desc}}\left ( \mathbf{C} \right ) +\alpha E_{\mathrm{reg}}\left ( \mathbf{C} \right ), 
\end{equation}
where $E_{\mathrm{desc}}\left ( \mathbf{C} \right )=\left\|\mathbf{C^{\mathrm{LBO}}_{\mathcal{NM}}  (\Phi^{\mathrm{LBO}}_{\mathcal{N}})^{\dagger} \mathbf{F}_\mathcal{N} -(\Phi^{\mathrm{LBO}}_{\mathcal{M}} )^{\dagger}\mathbf{F}_\mathcal{M} }\right\|_\mathrm{F}^2 $ enforces descriptor preservation. Here, $\mathbf{F}_\mathcal{M} \in \mathbb{R}^{m \times d}$ and $\mathbf{F}_\mathcal{N} \in \mathbb{R}^{n \times d}$ are $d$-dimensional feature matrices for $\mathcal{M}$ and $\mathcal{N}$, respectively.  The term $E_{\mathrm{reg}} \left ( \mathbf{C} \right )$ represents a regularization function that promotes structural consistency in $\mathbf{C}$, and $\alpha$ is a regularization parameter. Once $\mathbf{C^{\mathrm{LBO}}_{\mathcal{NM}}}$ is obtained, the pointwise map is derived through a nearest neighbor search in the spectral embedding spaces ($\Phi^{\mathrm{LBO}}_{\mathcal{N}}$ and $\Phi^{\mathrm{LBO}}_{\mathcal{M}}\mathbf{C^{\mathrm{LBO}}_{\mathcal{NM}}}$). However, the quality of the resulting map is often suboptimal. To improve this, various refinement techniques have been proposed \cite{Melzi2019, magnet2022smooth, donati2022complex, Ren2018, Ling2021}, which iteratively alternate between optimizing the functional and pointwise maps to enhance accuracy.

\begin{figure*}[htbp]
	\centering
	\subfloat[{\normalfont(a) Typical dual-branch design}]{
		\includegraphics[scale=0.27]{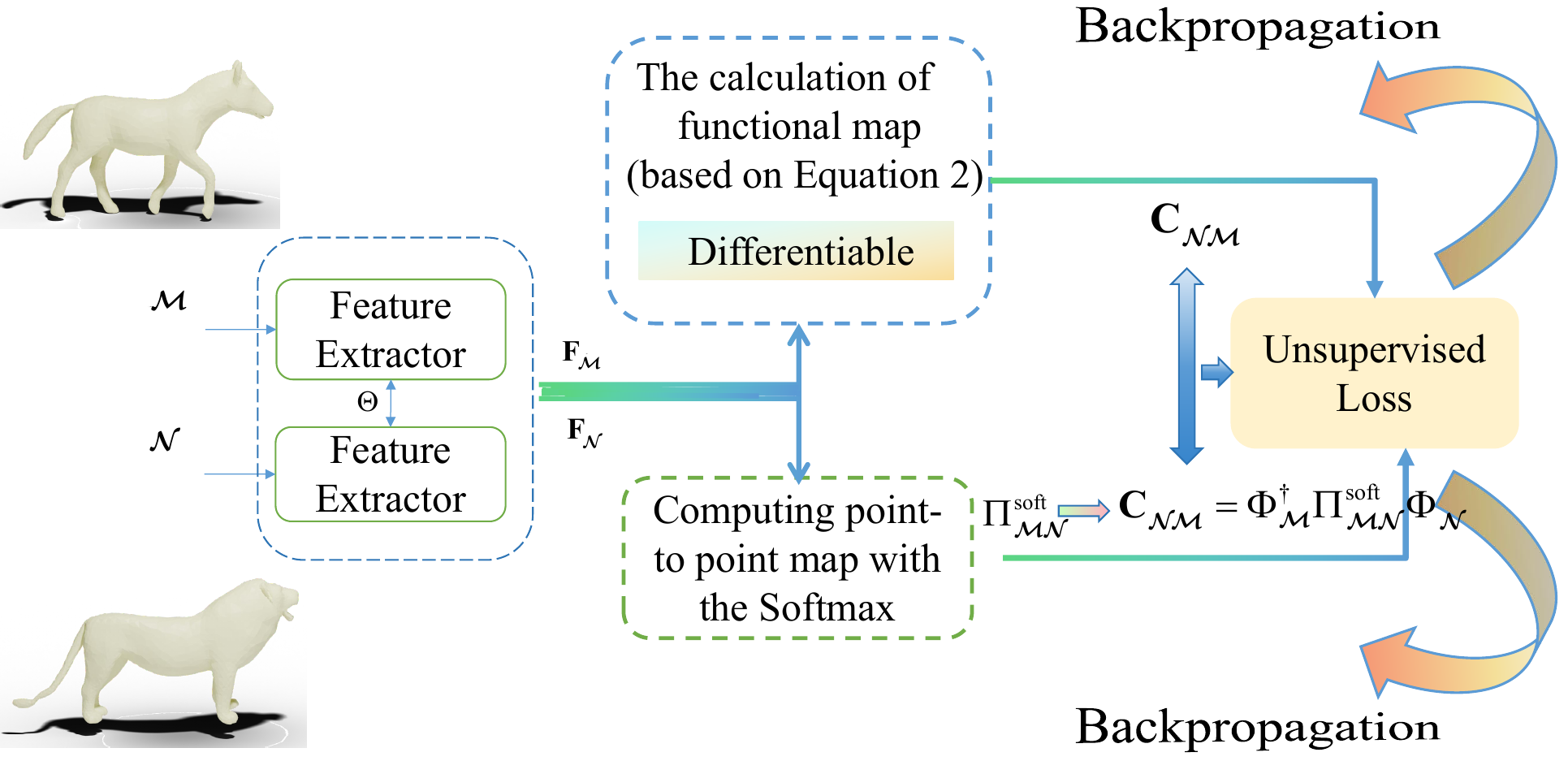}
	}
	\hfill
	\subfloat[{\normalfont (b) Our proposed MDND}]{
		\includegraphics[scale=0.27]{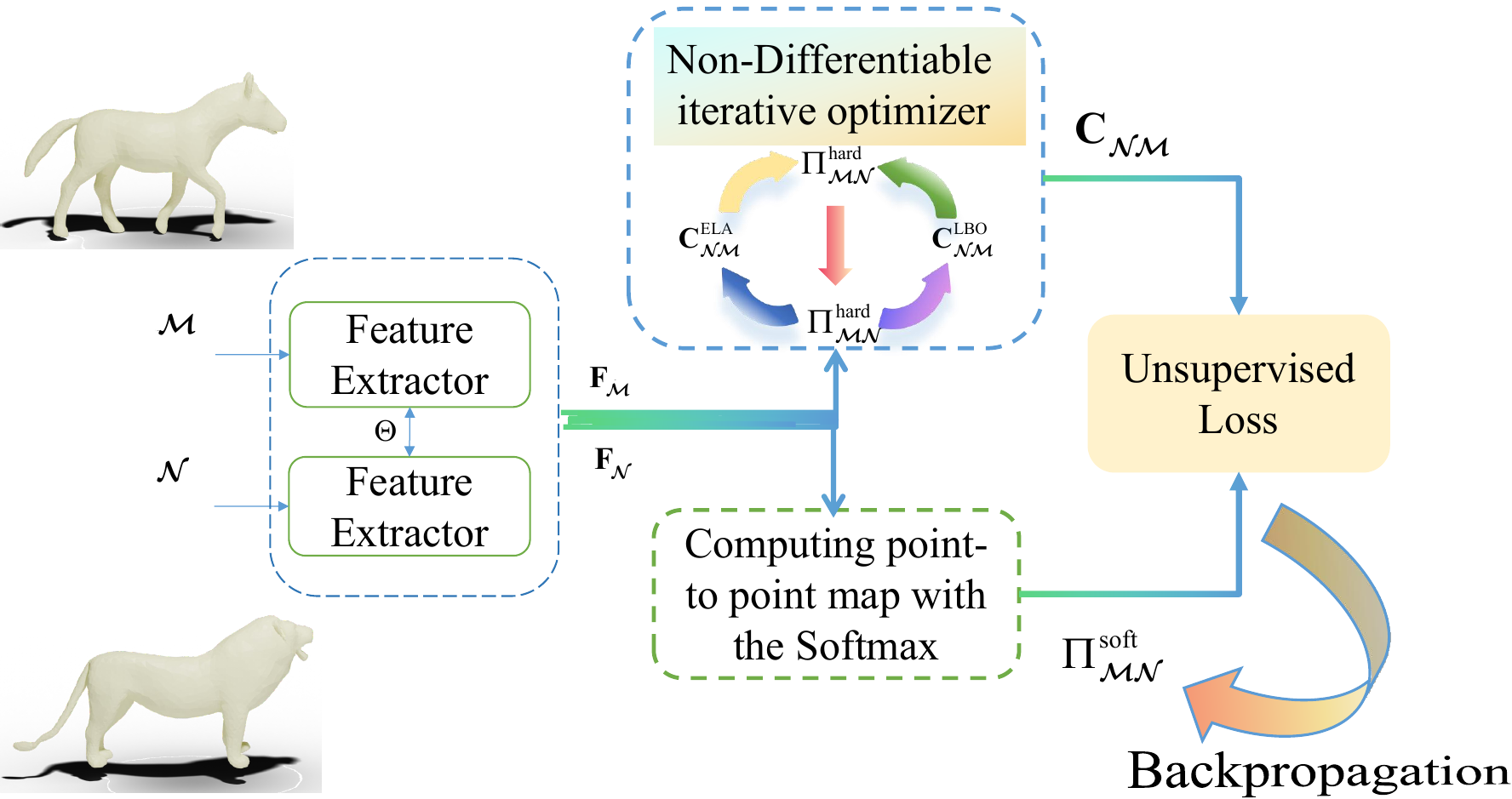}
	}
	\caption{A comparison of deep functional map architectures. (a) The conventional dual-branch framework which is fully differentiable and thus limited to differentiable solvers. (b) Our proposed MDND framework, which introduces a paradigm shift by incorporating a non-differentiable iterative refinement oracle. This oracle generates a high-quality target map, while gradients for training the feature extractor propagate exclusively through the parallel, differentiable branch. This design decouples refinement from learning and avoids potential gradient conflicts.}
	\label{fig:pipeline}
\end{figure*}

\subsection{Deep Functional Map} 
\label{DFM}
Deep functional map (DFM) methods have become state-of-the-art for non-rigid shape correspondence. The standard DFM pipeline, illustrated in Figure \ref{fig:pipeline}, consists of three main stages: feature extraction, differentiable map estimation, and unsupervised loss computation.

\textit{\textbf{Feature Extraction.}} Given two input shapes, $\mathcal{M}$ and  $\mathcal{N}$, a trainable Siamese network $\mathcal{F}_\theta$ is employed to extract pointwise feature descriptors, $\mathbf{F}_{\mathcal{M}}$ and $\mathbf{F}_{\mathcal{N}}$, respectively. Here, $\theta$ represents the learnable network parameters. DiffusionNet \cite{Nicholas2022} has emerged as the de facto standard backbone for this task, as it excels at learning robust features that are invariant to discretization and aware of orientation.

\textit{\textbf{Differentiable Map Estimation.}} The learned features are then passed to a differentiable solver, which computes a functional map, $\mathbf{C}$, by minimizing an energy function such as the one specified in Equation~\eqref{eq:C}. The differentiability of this step is crucial as it allows gradients to flow back to the feature extraction network during training.

\textit{\textbf{Unsupervised Losses.}}  To train the feature extractor $\mathcal{F}_\theta$ without ground-truth correspondences, several structural regularizers are imposed on the computed functional maps. Foundational losses include an orthogonality regularizer, which encourages the map to be area-preserving, and a bijectivity regularizer, which enforces cycle consistency (i.e., mapping from $\mathcal{M}$ to $\mathcal{N}$ and back should approximate the identity). These are formulated as:
\begin{equation}\label{eq:l_orth}
	\mathcal{L}_\mathrm{orth}=\left \| \mathbf{C}_\mathcal{NM}^\mathrm{T}\mathbf{C}_\mathcal{NM}-\mathbf{I} \right \|_\mathrm{F}^2 + \left \| \mathbf{C}_\mathcal{MN}^\mathrm{T}\mathbf{C}_\mathcal{MN}-\mathbf{I} \right \|_\mathrm{F}^2,	  
\end{equation}
\begin{equation}\label{eq:l_bij}
	\mathcal{L}_\mathrm{bij}=\left \| \mathbf{C}_\mathcal{NM}\mathbf{C}_\mathcal{MN}-\mathbf{I} \right \|_\mathrm{F}^2 + \left \| \mathbf{C}_\mathcal{MN}\mathbf{C}_\mathcal{NM}-\mathbf{I} \right \|_\mathrm{F}^2. 
\end{equation}

More recently, \cite{Cao2023Unsupervised} introduced a coupling loss used to enforce consistency between the map computed by the differentiable solver and the one derived from the soft pointwise map $\Pi_{\mathcal{MN}}^\mathrm{soft}$, which is derived from the feature similarity using a Softmax operation.:
\begin{equation}\label{label:l_couple}
	\mathcal{L}_\mathrm{couple}=\left \| \mathbf{C}_\mathcal{NM}-\Phi_{\mathcal{M}}^\dagger \Pi_\mathcal{MN}^\mathrm{soft}\Phi_\mathcal{N}   \right \|_\mathrm{F}^2 
\end{equation}
This encourages the learned functional map to correspond to a valid pointwise map, significantly improving matching accuracy.

Our MDND framework fundamentally departs from this standard pipeline. Instead of relying on a differentiable solver to compute an initial map, \textit{we leverage a powerful, non-differentiable iterative refinement algorithm}. This algorithm directly optimizes the functional map to a much higher quality, and its output is then used as a supervisory signal to guide the learning of the feature network. This key difference allows us to break free from the constraints of end-to-end differentiability and integrate the strengths of axiomatic refinement into the deep learning process.

\section{Method}
Traditional iterative optimization methods are crucial in functional map computations; however, their direct integration into deep learning frameworks has been hindered by their non-differentiable nature. To overcome this limitation, we introduce a novel paradigm called MDND (\textbf{M}erging \textbf{D}ifferentiable and \textbf{N}on-\textbf{D}ifferentiable Branches), which facilitates information exchange between differentiable and non-differentiable components within deep functional maps. This approach challenges the prevailing assumption that only differentiable operations are suitable for such frameworks. Figure \ref{fig:pipeline} offers a concise overview of our method. Specifically, the first branch generates hard correspondence derived from learned features and serves as the input to produce a well-structured functional map using a novel iterative optimization solver (see Algorithm \ref{power1}). This map subsequently serves as a supervisory signal to guide the backpropagation of the second Differentiable branch, which operates on soft correspondences. A detailed explanation of our method is provided in the following.

\subsection{Feature Extractor} \label{Feature}
The first core component of our network is the Deep Feature Module, implemented as a Siamese network with shared weights. This module extracts features from the source and target shapes, which consist of $m$ and $n$ vertices, respectively. We employ DiffusionNet \cite{Nicholas2022}, a state-of-the-art surface feature extractor that utilizes diffusion across the surface to generate features resilient to discretization variations. Furthermore, DiffusionNet incorporates a spatial gradient operation to effectively address bilateral symmetry. The extracted features for the source and target shapes are denoted by $\mathbf{F}_\mathcal{M} \in \mathbb{R}^{m \times d}$ and $\mathbf{F}_\mathcal{N} \in \mathbb{R}^{n \times d}$, respectively, where $d$ signifies the dimensionality of the learned features.

\subsection{Non-Differentiable Iterative Refinement}\label{subsec:hwf}
Inspired by the efficiency of spectral filtering techniques like MWP \cite{Ling2021}, we propose a novel and efficient iterative optimization method, which we term \textbf{Hybrid Wavelet Filtering (HWF)}. The core idea is that a high-quality correspondence can be recovered through a remarkably simple iterative loop: (1) converting a pointwise map to its functional map representation, (2) refining this functional map via spectral filtering, and (3) converting the refined map back to an updated pointwise map. Starting with an initial correspondence, iterating these steps rapidly converges to an accurate solution at a very low computational cost.

Our primary contribution, which distinguishes HWF from MWP, is the \textbf{generalization of this filtering process to hybrid spectral bases}. Instead of operating solely on the standard LBO basis, our method leverages a combined basis of both LBO and ELA eigenfunctions. This is crucial for improving robustness in challenging scenarios involving non-isometric deformations and topological noise, where the LBO basis alone is insufficient. While inspired by MWP, we present a completely different theoretical derivation, which is provided in detail in the appendix.

The complete algorithmic workflow is detailed in Algorithm~\ref{power1}. In the algorithm,
$\Phi^{\mathrm{LBO}}_{\mathcal{M}}$ and $\Phi^{\mathrm{ELA}}_{\mathcal{M}}$ are matrices composed of the LBO and ELA eigenfunctions, while $\Lambda^\mathrm{LBO}$ and $\Lambda^\mathrm{ELA}$ are the diagonal matrices of their corresponding eigenvalues. $\{g(s_l\lambda)\}_{l=1}^{L}$ represents a family of spectral manifold wavelet filters, $\Pi^{\mathrm{hard}}_\mathcal{MN}$ is  a vertex index sequence, and $(\mathbf{C}_{\mathcal{NM}}^\wedge)^\ast$ denotes to the adjoint operator of $\mathbf{C}_{\mathcal{NM}}^\wedge$.

In the following section, we will detail how we embed this powerful, non-differentiable HWF algorithm into our deep functional map framework to serve as a supervisory oracle, guiding the learning of robust feature descriptors.

\begin{algorithm}[!ht]
	\caption{HWF for Correspondence}
	\label{power1}
	\begin{algorithmic} 
		\STATE \textbf{Input}: Initialize pointwise map $\Pi^{\mathrm{hard}}_\mathcal{MN}$
		\STATE \textbf{Output}: Refined $\Pi^{\mathrm{hard}}_\mathcal{MN}$, $(\mathbf{C}^\mathrm{ELA}_{\mathcal{NM}})^\wedge$, $(\mathbf{C}^\mathrm{LBO}_{\mathcal{NM}})^\wedge$
		\STATE Iterative updates between $\mathbf{C^{\mathrm{LBO}}_{\mathcal{NM}}}$, $\mathbf{C^{\mathrm{ELA}}_{\mathcal{NM}}}$ and $\Pi^{\mathrm{hard}}_\mathcal{MN}$
		\STATE For $i=1$ to $maxIter$ do         \STATE\hspace{1cm}$\mathbf{C}^{\mathrm{ELA}}_{\mathcal{NM}}=(\Phi^{\mathrm{ELA}}_{\mathcal{M}})^\dagger\Pi_{\mathcal{MN}}^\mathrm{hard}\Phi^{\mathrm{ELA}}_{\mathcal{N}}$
		\vspace{0.1cm}        \STATE\hspace{1cm}$(\mathbf{C}^\mathrm{ELA}_{\mathcal{NM}})^\wedge=\sum_{l=1}^{L} g(s_l \boldsymbol{\Lambda}_\mathcal{M}^\mathrm{ELA}) \mathbf{C}^{\mathrm{ELA}}_{\mathcal{NM}} g(s_l \boldsymbol{\Lambda}_\mathcal{N}^\mathrm{ELA})$
		\vspace{0.1cm}         \STATE\hspace{1cm}$\mathbf{C}^{\mathrm{LBO}}_{\mathcal{NM}}=(\Phi^{\mathrm{LBO}}_{\mathcal{M}})^\dagger\Pi_{\mathcal{MN}}^\mathrm{hard}\Phi^{\mathrm{LBO}}_{\mathcal{N}}$   
		\vspace{0.1cm}          \STATE\hspace{1cm}$(\mathbf{C}^{\mathrm{LBO}}_{\mathcal{NM}})^\wedge=\sum_{l=1}^{L} g(s_l \boldsymbol{\Lambda}_\mathcal{M}^\mathrm{LBO}) \mathbf{C}^{\mathrm{LBO}}_{\mathcal{NM}} g(s_l \boldsymbol{\Lambda}_\mathcal{N}^\mathrm{LBO})$   
		\vspace{0.1cm} \STATE\hspace{1cm}$\Pi^{\mathrm{hard}}_\mathcal{MN}=\mathrm{NNsearch}
		\begin{pmatrix}
			\Phi_\mathcal{N}^{\mathrm{ELA}}((\mathbf{C}^{\mathrm{ELA}}_{\mathcal{NM}})^\wedge)^\ast& \Phi_\mathcal{M}^{\mathrm{ELA}}\\
			\Phi_\mathcal{N}^{\mathrm{LBO}}((\mathbf{C}^{\mathrm{LBO}}_{\mathcal{NM}})^\wedge)^\ast& \Phi_\mathcal{M}^{\mathrm{LBO}}
		\end{pmatrix}$     
		\STATE end
	\end{algorithmic}
\end{algorithm}

\subsection{The MDND Dual-Branch Architecture}
\label{FM-module}
Our framework is built on a dual-branch architecture designed to leverage the strengths of both differentiable learning and non-differentiable optimization. One branch operates in a fully differentiable manner to enable gradient-based training, while the other, non-differentiable branch acts as a powerful refinement oracle to provide high-quality supervision.

\textbf{\textit{Non-Differentiable Refinement Branch.}} The purpose of this branch is to generate a highly accurate target correspondence. It begins by computing an initial hard pointwise map, $\Pi^{\mathrm{hard}}_{\mathcal{MN}}$, via a simple nearest-neighbor search on the learned features $\mathbf{F}_\mathcal{M}$ and $\mathbf{F}_\mathcal{N}$:
\begin{equation}
	\Pi^{\mathrm{hard}}_{\mathcal{MN}}=\mathrm{NNsearch}({\mathbf{F}_\mathcal{N}},{\mathbf{F}_\mathcal{M}})
\end{equation}
This hard map, which can be memory-efficiently represented as a single vector of indices in implementation, serves as the input to our Hybrid Wavelet Filtering (HWF) algorithm (Algorithm~\ref{power1}). The HWF iteratively refines this initial guess, producing high-quality functional maps based on hybrid spectral bases, $(\mathbf{C}^\mathrm{ELA}_{\mathcal{NM}})^\wedge$ and $(\mathbf{C}^\mathrm{LBO}_{\mathcal{NM}})^\wedge$. Since the entire HWF process is parameter-free and non-differentiable, we do not track gradients through this branch, treating its output solely as a supervisory signal.

\textbf{\textit{Differentiable Learning Branch.}} This branch is responsible for learning the feature extractor $\mathcal{F}_\theta$. To maintain a differentiable path for backpropagation, we compute a soft pointwise map, $\Pi^{\mathrm{soft}}_{\mathcal{MN}}$, from the learned features. This is achieved by calculating a feature similarity matrix and applying a temperature-scaled Softmax operator:
\begin{equation}
	\Pi^{\mathrm{soft}}_{\mathcal{MN}}=\mathrm{Softmax}({\mathbf{F}_\mathcal{M}}{\mathbf{F}^\mathrm{T}_\mathcal{N}}/\tau),
\end{equation}
where $\tau$ is the temperature parameter that controls the softness of the resulting map. The output of this branch, $\Pi^{\mathrm{soft}}_{\mathcal{MN}}$, is a soft correspondence that can be directly used in our loss function to update the network weights. 

\begin{table*}[h!t]
	\centering
		\begin{tabular}{l *{8}{r}} 
			\toprule
			Method / Dataset  & \makecell[c]{F\_r/\\ F\_r} & \makecell[c]{S\_r/\\ S\_r} & \makecell[c]{F\_r/\\ S\_r} & \makecell[c]{S\_r/\\ F\_r} & SMAL & \makecell[c]{DT4D-H\\ inter} & \makecell[c]{DT4D-H\\ intra} & TOPKIDS\\ 
			\midrule
			BCICP & 6.1 & 11.0 & - & - & 28.6 & - &- &-\\
			ZoomOut & 6.1 & 7.5 & - & - & 38.4 & 29.0 &4.0&33.7 \\
			SmoothShells & 2.5 & 4.7 & - & - &36.1&6.4&1.2&10.8 \\
			DiscreteOp & 5.6 & 13.1 & - & - & 38.1&27.6&3.6 &35.5\\
			MWP & 3.1 & 4.1 & - & - & 20.9 &25.4&25.4&5.7\\
			\midrule
			FMNet & 11.0 & 17.0 & 30.0 & 33.0 & 42.0&38.0 &9.6&- \\
			GeomFmaps & 3.5 & 4.3 & 4.8 & 4.0 & 8.4&4.2&1.9&- \\
			\midrule
			DeepShells & 1.9 & 4.5 & 6.8 & 5.5 & 28.7&31.1&3.4 &13.7\\
			DUOFMNet & 2.5 & 2.6 & 4.2 & 2.7 & 6.7&15.8&2.6&- \\
			AttentiveFMaps & 1.9 & 2.1 &  \underline{2.6} & 1.9 & 4.4 &11.6&1.7&23.4\\
			ConsistentFMaps & 2.3 & 2.4 &  \underline{2.6} & 2.5 & 5.2  & 6.1&1.2&-\\
			RFMNet & 1.6 & 4.5 & 5.3 & 2.1  & 4.4&5.4&1.8&\underline{4.9}\\
			ULRSSM & \underline{1.6} & \textbf{1.8} & 6.4 & 4.5 & 4.4 &\underline{4.1}&\textbf{0.9}&9.2 \\
			MSSFMaps & 1.9 & 2.6 & 4.6 & 2.8 & 4.3 &\underline{4.1}&1.8 &-\\
			HybridFMaps & \textbf{1.5} & \textbf{1.8} & 8.2 & \underline{1.8} & \underline{3.3}&\textbf{3.5} &\underline{1.0} &5.0\\
			MSRFMNet & 1.7 & 2.1 &  \underline{2.6} & 2.0 & 4.5&36.2 &1.5&33.2\\
			DFAFM & \underline{1.6}& \underline{1.9} & 2.7 &1.9 & 3.9&4.2 &\textbf{0.9}&6.3\\
			Ours & \underline{1.6} & \underline{1.9} & \textbf{2.1} & \textbf{1.6} & \textbf{3.1}&4.4&\underline{1.0}&\textbf{3.5}\\
			\bottomrule
		\end{tabular}
		\caption{Quantitative comparison with state-of-the-art methods. We report the mean geodesic error ($\times 100$) across datasets representing near-isometric, non-isometric, and topologically noisy scenarios. Methods are grouped by category. The \textbf{best} and \underline{second-best} results are highlighted in bold and underlined, respectively.}
	\label{Result}
\end{table*}
\subsection{Unsupervised Loss Function}
\label{loss}
Traditional DFM frameworks often rely on multiple structural regularizers, such as orthogonality and bijectivity losses. Balancing the weights of these competing terms can be challenging and can complicate the optimization landscape. To avoid this, we adopt a single, streamlined unsupervised loss function inspired by recent work \cite{hu2023rfmnet}. Our loss directly enforces consistency between the output of our two branches: the soft pointwise map from the differentiable branch and the high-quality functional map from the non-differentiable oracle. The alignment loss constructed upon hybrid spectral bases is defined as:
\begin{equation}
	\mathcal{L}_\mathrm{align}=\left\|{\Phi}_\mathcal{M}-\Pi^{\mathrm{soft}}_{\mathcal{MN}}{\Phi}_\mathcal{N}(\mathbf{C}_{\mathcal{NM}}^\wedge)^\ast\right\|_\mathrm{F}^2.
\end{equation}
During backpropagation, $\mathbf{C}_{\mathcal{NM}}^\wedge$ obtained by the HWF refiner is treated as a fixed, detached constant, ensuring that gradients only flow through the differentiable branch to update the feature extractor. This elegant formulation allows the network to learn meaningful correspondences by chasing a high-quality, iteratively refined target, without the need for multiple, hand-weighted loss terms.

\section{Experiments}
In this section, we conduct extensive experiments to evaluate our method. We compare MDND against a diverse set of previous approaches across a broad range of challenging scenarios, from near-isometric matching to settings with significant non-isometric deformations and topological noise.

\subsection{Implementation Details}
Our method was implemented in PyTorch, and all experiments were run on a single NVIDIA RTX 4090 GPU. Following standard evaluation protocols, we report the mean geodesic error, normalized by the square root of the source shape's area. For fair comparison, we use 128-dimensional HKS descriptors \cite{Sun2009} as input features for all methods and datasets. We set the LBO basis size to 128 and the ELA basis size to 200 for all datasets, except for SMAL, where it was reduced to 100 due to the nature of the shapes. No post-processing or test-time adaptation was applied to our results. For readability, all reported geodesic errors in our tables are multiplied by 100.

\subsection{Comparison with State-of-the-Art}
\textbf{\textit{Baselines.}} We compare our method against a comprehensive set of recent and influential works, which can be categorized as follows:
\begin{itemize}
	\item \textit{Axiomatic methods:} BCICP \cite{Ren2018}, ZoomOut \cite{Melzi2019}, Smooth-Shells \cite{eisenberger2020smooth}, DiscreteOp \cite{Ren2021}, and MWP \cite{Ling2021}.
	\item \textit{Supervised methods:} FMNet~\cite{litany2017deep} and GeomFmap~\cite{Donati2020}.
	\item \textit{Unsupervised methods:} A wide range of recent approaches including DeepShells \cite{Eisenberger2020a}, DUOFMNet \cite{donati2022deep}, AttentiveFMaps \cite{Li2022LearningMF}, ConsistentFMaps \cite{sun2023spatially}, RFMNet \cite{hu2023rfmnet}, ULRSSM \cite{Cao2023Unsupervised}, MSSFMaps \cite{magnet2024memory}, HybridFMaps \cite{bastian2024hybrid}, MSRFMNet \cite{liu2024multiscale} and DFAFM \cite{luo2025deep}.
\end{itemize}

\begin{figure}[htbp]
	\begin{centering}
		\includegraphics[width=0.99\linewidth]{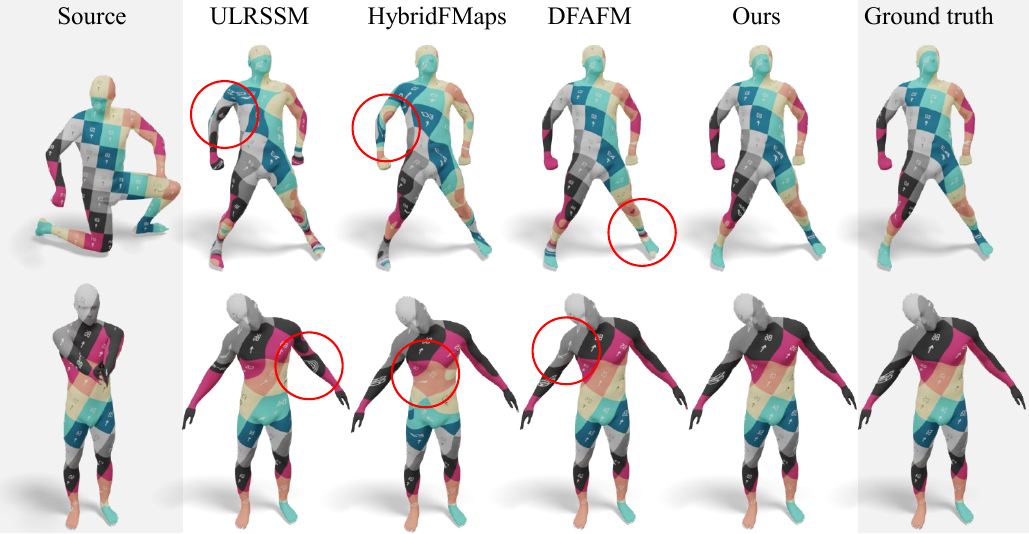}
	\end{centering}
	\caption{Cross-dataset generalization results. Correspondence quality is visualized using texture transfer when training and testing on different datasets.}
	\label{fig:f_s_generalization}
\end{figure}

\textbf{\textit{Near-isometric Shape Matching.}} We first evaluate MDND on two standard near-isometric benchmarks: FAUST \cite{bogo2014faust} and SCAPE \cite{Anguelov2005}. As shown in Table \ref{Result}, our method achieves performance comparable to the state-of-the-art on the standard remeshed FAUST and SCAPE test sets. Notably, our method excels in the generalization tests (F→S and S→F), indicating its ability to learn robust features that transfer well across different shape collections.

\textbf{\textit{Non-isometric Shape Matching.}} To assess performance on more challenging non-isometric shapes, we use the SMAL \cite{zuffi20173d} and DT4D-H \cite{magnet2022smooth} datasets. On SMAL, which contains various tetrapod species, Table \ref{Result} shows that our method significantly outperforms approaches that rely solely on the LBO basis, underscoring the importance of extrinsic information for non-isometric correspondence. On the DT4D-H dataset, our method demonstrates superiority over most competitors in both intra-class and inter-class matching scenarios. 

\textbf{\textit{Matching with Topological Noise.}} Finally, we test the robustness of our method on the TOPKIDS dataset \cite{lahner2016shrec}, which features near-isometric deformations corrupted by significant topological artifacts. The results in Table \ref{Result} are striking: our method achieves the best performance, improving upon the next-best approach by a remarkable 30\%. This highlights the exceptional robustness of our refinement oracle and learning framework. 

\subsection{Ablation Studies}
Our primary contributions are twofold: the integration of a non-differentiable iterative refinement method into the deep functional map framework, and the generalization of the MWP algorithm to hybrid bases (HWF). To validate the effectiveness of each component, we conduct the following targeted ablation studies.

\textbf{\textit{(1) Differentiable Solver vs. Non-Differentiable Iterative Refinement.}} 
To verify the benefit of embedding a non-differentiable optimizer within our learning framework, we compare our full MDND model against a baseline that adheres to a more traditional, fully differentiable pipeline. In this baseline, we replace our non-differentiable iterative refinement branch with a standard differentiable solver that computes the functional map directly from the features via Equation (2). We perform experiments on representative datasets covering near-isometric (SCAPE), non-isometric (SMAL), and topologically noisy (TOPKIDS) shapes.

The results, presented in Table \ref{n-d}, demonstrate that our proposed MDND framework with non-differentiable refinement consistently achieves superior matching accuracy across all three categories of datasets. This confirms that using a powerful, non-differentiable oracle to generate a high-quality supervisory signal is more effective than relying on a purely differentiable solver.
\begin{table}
	\begin{center}
		\begin{tabular}{crrrr}
			\toprule
			Method/Dataset   &{\makecell[c]{SCAPE}}& {\makecell[c]{SMAL}}&{\makecell[c]{TOPKIDS}} \\
			\midrule
			Differentiable Solver  & 2.0 &4.2  &9.4   \\
             Non-Differentiable \\
             Iterative Refinement & \textbf{1.9} & \textbf{3.1}&\textbf{3.5}    \\
		
			\bottomrule
		\end{tabular}
	\end{center}
        \caption{Impact of the non-differentiable refinement branch. To measure the effectiveness of our core contribution, we replaced our non-differentiable oracle with a standard differentiable solver. The results show a clear performance drop for the fully differentiable version, confirming that our architecture is crucial for achieving state-of-the-art accuracy.}
	\label{n-d}
\end{table}

\textbf{\textit{(2) Analysis of the Refinement Oracle: MWP vs. HWF.}}
To address the limitations of the LBO basis in non-isometric scenarios, we proposed HWF, which generalizes MWP to a hybrid basis. To isolate and validate the effectiveness of this contribution, we configure the MDND framework with three different refinement oracles: 
\begin{itemize}
	\item MWP (LBO only): The standard MWP algorithm using only the LBO basis.
	\item MWP (ELA only): MWP adapted to use only the ELA basis.
	\item HWF (LBO + ELA): Our proposed method using the hybrid basis.
\end{itemize}
All other experimental settings remain constant. The results, shown in Table \ref{mwp}, clearly indicate that our HWF (LBO + ELA) achieves the best matching performance across all datasets. This validates our hypothesis that generalizing the refinement to a hybrid basis provides a significant advantage, leading to a more robust oracle capable of handling diverse and challenging geometric settings.
\begin{table}
	\begin{center}
		\begin{tabular}{crrrr}
			\toprule
			Method/Dataset   &{\makecell[c]{SCAPE}}& {\makecell[c]{SMAL}}&{\makecell[c]{TOPKIDS}} \\
			\midrule
			MWP (LBO only) &2.2  & 4.9 &14.7   \\
            MWP (ELA only) & 2.3 & 4.1&5.5    \\
			 HWF (LBO + ELA) &\textbf{1.9}  & \textbf{3.1}&\textbf{3.5}      \\
			\bottomrule
		\end{tabular}
	\end{center}
        \caption{Effectiveness of the Hybrid-Basis Refiner (HWF). To isolate the contribution of our proposed HWF algorithm, we compare its performance against single-basis alternatives. The results confirm that combining both the LBO and ELA bases within our HWF refiner is crucial for achieving the best performance.}
	\label{mwp}
\end{table}

\section{Conclusions}
In this paper, we introduced MDND, a novel deep functional map framework that merges the power of deep learning with the robustness of traditional axiomatic optimization. Our core contribution was a dual-branch architecture that leverages a powerful, non-differentiable iterative refinement oracle to provide high-quality supervision for a feature-learning network. This was enabled by our new HWF algorithm, which operates on expressive hybrid (LBO+ELA) bases. Our approach simplifies the training process and achieves state-of-the-art accuracy, particularly on challenging non-isometric and topologically noisy shapes.

Despite these strong results, we acknowledge the inherent limitations of a purely spectral approach. A promising direction for future work is to integrate our framework with explicit spatial deformation models, potentially bridging the gap between the spectral and spatial domains for even greater robustness.

\section{Acknowledgments}
This work was supported by the Natural Science Foundation of China (No. 62172447, 62302530), the Hunan Provincial Natural Science Foundation of China (No. 2023JJ40769), and the Postgraduate Research and Innovation Project of Hunan Province (No. CX20250157). This work was supported in part by the High Performance Computing Center of Central South University.

\bibliography{aaai2026}
\clearpage
\appendix
\twocolumn[
\begin{center}
    {\Large \bf MDND: Unsupervised Learning Guided by Non-Differentiable Refinement for Shape Correspondence\\[0.5em] Supplementary Materials}
\end{center}
]
In these supplementary materials, we provide a detailed theoretical derivation of our proposed Hybrid Wavelet Filtering (HWF) algorithm and include additional experimental results that could not be accommodated in the main paper due to space constraints.

\section{Derivation of the HWF Algorithm}
\label{AUFM}
In this section, we provide the theoretical justification for our iterative refinement algorithm, HWF. We begin by reviewing the necessary preliminaries on spectral manifold wavelets and their relationship with functional maps. We then extend the existing commutativity constraint to a hybrid basis and formulate a corresponding optimization objective. Finally, we detail the iterative procedure for solving this objective.

\subsection{Background: Functional Maps and Spectral Wavelet Commutativity}
Wavelet analysis is a cornerstone of signal processing, valued for its multi-scale analysis capabilities. To generalize this concept to non-Euclidean domains, Hammond et al. \cite{Hammond2011} introduced spectral graph wavelets, which are defined in the spectral domain analogous to classical Fourier-based wavelet operations. This formulation preserves the desirable localization properties of wavelets while being computationally efficient. This idea was subsequently extended to Riemannian manifolds by replacing the graph Laplacian with the Laplace-Beltrami Operator (LBO) \cite{Li2013, Hu2019}.

\begin{Definition}[Spectral Manifold Wavelet Operator]
Let $g(\cdot)$ be a non-negative, continuous function acting as a kernel, such as a low-pass or band-pass filter. The spectral manifold wavelet operator at scale $s$ is defined as: 
\begin{equation}
    \mathcal{W}_s = g(s\Delta),
\end{equation}
where $\Delta$ denotes the Laplace–Beltrami operator (LBO). Given the eigendecomposition of the LBO, $\Delta=\Phi^{\mathrm{LBO}} \Lambda^{\mathrm{LBO}}(\Phi^{\mathrm{LBO}})^\dagger$, the matrix representation of this operator in the LBO eigenbasis is given by $\mathbf{W}_{s} = \Phi^{\mathrm{LBO}} g(s\boldsymbol\Lambda^{\mathrm{LBO}})(\Phi^{\mathrm{LBO}})^\dagger$.
\end{Definition}

A key property for shape correspondence is that, for near-isometric mappings, the functional map operator commutes with the spectral manifold wavelet operator. This has been established in prior work \cite{liu2024multiscale} and is formalized below.

\begin{Remark}[Commutativity Property]
\label{commutativity}
Let $\mathcal{M}$ and $\mathcal{N}$ be two Riemannian manifolds, and let $T: \mathcal{M} \to \mathcal{N}$ be an isometric mapping. Let  $T_F: \mathcal{L}^2 (\mathcal{N}) \to \mathcal{L}^2 (\mathcal{M})$ be the induced functional map operator and let $\mathcal{W}_{s}^\mathcal{M}$ and $\mathcal{W}_{s}^\mathcal{N}$ be the spectral manifold wavelet operators. Then, $T_F$ commutes with the wavelet operators:
\begin{equation}
    \label{cont_reg}
    T_{F}\mathcal{W}^{\mathcal{N}}_{s} = \mathcal{W}^{\mathcal{M}}_{s}T_{F}.
\end{equation}
In the discrete setting, using the LBO eigenbasis, this commutativity is expressed as a constraint on the functional map matrix $\mathbf{C^{\mathrm{LBO}}_{\mathcal{NM}}}$:
\begin{equation}\label{eq:discrete_commutativity}
  \mathbf{C^{\mathrm{LBO}}_{\mathcal{NM}}}g(s\boldsymbol{\Lambda}^{\mathrm{LBO}}_\mathcal{N} )=g(s\boldsymbol{\Lambda}^{\mathrm{LBO}}_\mathcal{M} )\mathbf{C^{\mathrm{LBO}}_{\mathcal{NM}}}.  
\end{equation}
\end{Remark}

\begin{proof}
    The proof follows directly from the definitions. The matrix representation of the operators are: $T_{F}=\Phi^{\mathrm{LBO}}_\mathcal{M}\mathbf{C^{\mathrm{LBO}}_\mathcal{NM}}(\Phi^{\mathrm{LBO}}_\mathcal{N})^\dagger$, $\mathbf{W}_{s}^\mathcal{M} = \Phi^{\mathrm{LBO}}_\mathcal{M} g(s\boldsymbol\Lambda^{\mathrm{LBO}}_\mathcal{M})(\Phi^{\mathrm{LBO}}_\mathcal{M})^\dagger$ and $\mathbf{W}_{s}^\mathcal{N} = \Phi^{\mathrm{LBO}}_\mathcal{N} g(s\boldsymbol\Lambda^{\mathrm{LBO}}_\mathcal{N})(\Phi^{\mathrm{LBO}}_\mathcal{N})^\dagger$. Substituting these into Equation~\eqref{cont_reg} yields:
    \begin{align*}      
        \Phi^{\mathrm{LBO}}_\mathcal{M}\mathbf{C^{\mathrm{LBO}}_\mathcal{NM}}(\Phi^{\mathrm{LBO}}_\mathcal{N})^\dagger \Phi^{\mathrm{LBO}}_\mathcal{N} g(s\boldsymbol\Lambda^{\mathrm{LBO}}_\mathcal{N})(\Phi^{\mathrm{LBO}}_\mathcal{N})^\dagger & = \\ \Phi^{\mathrm{LBO}}_\mathcal{M} g(s\boldsymbol\Lambda^{\mathrm{LBO}}_\mathcal{M})(\Phi^{\mathrm{LBO}}_\mathcal{M})^\dagger \Phi^{\mathrm{LBO}}_\mathcal{M}\mathbf{C^{\mathrm{LBO}}_\mathcal{NM}}(\Phi^{\mathrm{LBO}}_\mathcal{N})^\dagger
    \end{align*}
    Since $\Phi^\dagger\Phi=\mathbf{I}$, the above formula can be simplified to Equation~\eqref{eq:discrete_commutativity}.
\end{proof}

\subsection{Motivation for a Hybrid-Basis Approach}
The commutativity property in Remark~\ref{commutativity} provides a powerful constraint for near-isometric matching, as it is based on the intrinsic LBO basis. However, this very intrinsicality is a limitation. The LBO basis is, by definition, insensitive to extrinsic deformations, making it struggle to capture high-frequency details like sharp creases and bends. This inadequacy becomes a significant bottleneck in challenging non-isometric correspondence scenarios.

Inspired by recent work on the extrinsic elastic basis (ELA basis) \cite{hartwig2023elastic}, which excels at representing such fine-grained geometric features, we propose a method that integrates both intrinsic and extrinsic information. Our HWF algorithm is built upon a generalized commutativity constraint that operates on a hybrid basis composed of both LBO and ELA eigenfunctions.

In the subsequent sections, we will first formulate this hybrid-basis constraint as an energy function. Then, we will derive an efficient iterative algorithm to solve the corresponding optimization problem, demonstrating how to synergistically leverage both intrinsic and extrinsic spectral information for robust shape correspondence.

\subsection{Optimization Problem}
\label{Energy formulation}
As previously discussed, only using the LBO basis has limitations within the functional map framework. Therefore, we consider employing both the LBO basis and the ELA basis simultaneously. First, we present the hybrid basis function spaces $\Phi_\mathcal{M}\in \mathbb{R}^{m \times k}$ and $\Phi_\mathcal{N}\in \mathbb{R}^{n \times k}$ for manifolds $\mathcal{M}$ and $\mathcal{N}$ respectively, where $\Phi_\mathcal{M}=[\Phi^{\mathrm{LBO}}_{\mathcal{M}}  \Phi^{\mathrm{ELA}}_{\mathcal{M}}]$, $\Phi_\mathcal{N}=[\Phi^{\mathrm{LBO}}_{\mathcal{N}}  \Phi^{\mathrm{ELA}}_{\mathcal{N}}]$ and $k$  is the number of hybrid bases. From this, it can be observed that the hybrid bases matrix is formed by concatenating the two sets of bases column-wise. Since our optimization objective is based on Remark~\ref{commutativity}, we give the matrix form of the spectral manifold wavelet operator under the hybrid bases: $\mathbf{W}^{\mathcal{M}}_{s} = \Phi_\mathcal{M}g(s\boldsymbol\Lambda_\mathcal{M})\Phi_\mathcal{M}^\dagger $ and $\mathbf{W}^{\mathcal{N}}_{s} = \Phi_\mathcal{N}g(s\boldsymbol\Lambda_\mathcal{N})\Phi_\mathcal{N}^\dagger $. The aforementioned eigenvalue matrices are diagonal matrices composed of the eigenvalues derived from the LBO operator and those generated by the shell energy decomposition. Similarly, we present the matrix representation of the functional map $T_{F}$ under the hybrid bases as follows: $T_{F}=\Phi_\mathcal{M}\mathbf{C_\mathcal{NM}}\Phi_\mathcal{N}^\dagger$, $\Phi_\mathcal{M}$ and $\Phi_\mathcal{M}$ are hybrid bases as mentioned before, $\mathbf{C_\mathcal{NM}}\in \mathbb{R}^{k\times k}$ is the matrix form of the functional map under hybrid bases. Accordingly, the  operator commutativity constraints may be recast under a hybrid bases framework as: 
\begin{equation}
\mathbf{C_{\mathcal{NM}}}g(s\boldsymbol{\Lambda}_\mathcal{N} )=g(s\boldsymbol{\Lambda}_\mathcal{M} )\mathbf{C_{\mathcal{NM}}}.
\label{zong_op}
\end{equation}
\begin{proof}
	First, the functional map matrix is block-partitioned in accordance with the basis function structure, obtaining $T_{F}=[\Phi^{\mathrm{LBO}}_{\mathcal{M}}  \Phi^{\mathrm{ELA}}_{\mathcal{M}}]\begin{bmatrix}
		\mathbf{C}^\mathrm{LBO}_\mathcal{NM}& \mathbf{C}^\mathrm{Hybrid1}_\mathcal{NM}\\
		\mathbf{C}^\mathrm{Hybrid2}_\mathcal{NM} &\mathbf{C}^\mathrm{ELA}_\mathcal{NM}
	\end{bmatrix}\begin{bmatrix}
		(\Phi^{\mathrm{LBO}}_{\mathcal{N}})^\dagger\\
		(\Phi^{\mathrm{ELA}}_{\mathcal{N}})^\dagger
	\end{bmatrix}$. $\mathbf{W}^{\mathcal{M}}_{s}=[\Phi^{\mathrm{LBO}}_{\mathcal{M}}  \Phi^{\mathrm{ELA}}_{\mathcal{M}}]\begin{bmatrix}
		g(s\boldsymbol\Lambda^\mathrm{LBO}_\mathcal{M})& \\
		&{}(s\boldsymbol\Lambda^\mathrm{ELA}_\mathcal{M})
	\end{bmatrix}\begin{bmatrix}
		(\Phi^{\mathrm{LBO}}_{\mathcal{M}})^\dagger\\
		(\Phi^{\mathrm{ELA}}_{\mathcal{M}})^\dagger
	\end{bmatrix}$ and $\mathbf{W}^{\mathcal{N}}_{s}=[\Phi^{\mathrm{LBO}}_{\mathcal{N}}  \Phi^{\mathrm{ELA}}_{\mathcal{N}}]\begin{bmatrix}
		g(s\boldsymbol\Lambda^\mathrm{LBO}_\mathcal{N})& \\
		&g(s\boldsymbol\Lambda^\mathrm{ELA}_\mathcal{N})
	\end{bmatrix}\begin{bmatrix}
		(\Phi^{\mathrm{LBO}}_{\mathcal{N}})^\dagger\\
		(\Phi^{\mathrm{ELA}}_{\mathcal{N}})^\dagger
	\end{bmatrix}$. Consequently, the matrix representation of operator commutation under the hybrid bases is given by: $\begin{bmatrix}
		\mathbf{C}^\mathrm{LBO}_\mathcal{NM} & \mathbf{C}^\mathrm{Hybrid1}_\mathcal{NM}\\
		\mathbf{C}^\mathrm{Hybrid2}_\mathcal{NM} &\mathbf{C}^\mathrm{ELA}_\mathcal{NM}
	\end{bmatrix}\begin{bmatrix}
		g(s\boldsymbol\Lambda^\mathrm{LBO}_\mathcal{N})& \\
		&g(s\boldsymbol\Lambda^\mathrm{ELA}_\mathcal{N})
	\end{bmatrix}=\begin{bmatrix}
		g(s\boldsymbol\Lambda^\mathrm{LBO}_\mathcal{M})& \\
		&g(s\boldsymbol\Lambda^\mathrm{ELA}_\mathcal{M})
	\end{bmatrix}\begin{bmatrix}
		\mathbf{C}^\mathrm{LBO}_\mathcal{NM}& \mathbf{C}^\mathrm{Hybrid1}_\mathcal{NM}\\
		\mathbf{C}^\mathrm{Hybrid2}_\mathcal{NM} &\mathbf{C}^\mathrm{ELA}_\mathcal{NM}
	\end{bmatrix}$. Due to the inconsistent spectral characteristics exhibited by the LBO operator and shell energy, we conclude that the functional map matrix blocks $\mathbf{C}^\mathrm{Hybrid1}_\mathcal{NM}$  and $\mathbf{C}^\mathrm{Hybrid2}_\mathcal{NM}$ admit only the trivial solution of zero matrices. Therefore, the final optimization objective under hybrid bases becomes: $\mathbf{C}_{\mathcal{NM}}g(s\boldsymbol{\Lambda}_\mathcal{N} )=g(s\boldsymbol{\Lambda}_\mathcal{M} )\mathbf{C}_{\mathcal{NM}}$, where $\mathbf{C}_\mathcal{NM}=\begin{bmatrix}
		\mathbf{C}^\mathrm{LBO}_\mathcal{NM}&\\
		&\mathbf{C}^\mathrm{ELA}_\mathcal{NM}
	\end{bmatrix}$, $g(s\boldsymbol\Lambda_\mathcal{N})=\begin{bmatrix}
		g(s\boldsymbol\Lambda^\mathrm{LBO}_\mathcal{N})& \\
		&g(s\boldsymbol\Lambda^\mathrm{ELA}_\mathcal{N})
	\end{bmatrix}$  and $g(s\boldsymbol\Lambda_\mathcal{M})=\begin{bmatrix}
		g(s\boldsymbol\Lambda^\mathrm{LBO}_\mathcal{M})& \\
		&(s\boldsymbol\Lambda^\mathrm{ELA}_\mathcal{M})
	\end{bmatrix}.$
\end{proof}
In Hilbert space, the Hilbert–Schmidt norm effectively captures geometric structural information across the domain and range of the mapping operator, including anisotropic metrics, outperforming the Frobenius norm in this regard. Furthermore, the utilization of the Hilbert–Schmidt norm plays a crucial role in the success of work~\cite{hartwig2023elastic}. Thus, given $L$ discrete scales $\left \{ s_{l} \right \} _{l=1}^{L} $, we reformulate the discrete version of Equation \eqref{zong_op} as an optimization objective using the Hilbert–Schmidt norm to measure the magnitude of linear operators:
\begin{equation}
	\begin{aligned}
		&\min_{\mathbf{C_{\mathcal{NM}}}}E(\mathbf{C_{\mathcal{NM}}}),\\
		&E(\mathbf{C_{\mathcal{NM}}}) = \sum_{l=1}^{L}\left\|\mathbf{C_{\mathcal{NM}}}g(s_l\boldsymbol\Lambda_\mathcal{N} )-g(s_l\boldsymbol\Lambda_\mathcal{M} ) \mathbf{C_{\mathcal{NM}}}\right\|_\mathrm{HS}^{2}\\
	\end{aligned}
	\label{T0}
\end{equation}

However, on the one hand, directly solving the optimization problem tends to yield a trivial solution. On the other hand, although a pointwise map can naturally induce a functional map, the converse does not always hold. To address this issue, we constrain the functional map derived from a pointwise map, as specified in Equation~\eqref{C}. Consequently, the functional map optimization problem can be equivalently transformed into the task of recovering the corresponding pointwise map, which can be reformulated as follows:
\begin{equation}
	\label{T}
	\min_{\Pi_\mathcal{MN}}\sum_{l=1}^{L}\left\|\mathbf{C_{\mathcal{NM}}}g(s_l\boldsymbol\Lambda_\mathcal{N} )-g(s_l\boldsymbol\Lambda_\mathcal{M} ) \mathbf{C_{\mathcal{NM}}}\right\|_\mathrm{HS}^{2}
\end{equation}
\begin{equation}
	\label{C}
	\text{where}\, \mathbf{C}_{\mathcal{NM}}=\Phi^\dagger_\mathcal{M}\Pi_{\mathcal{MN}}\Phi_\mathcal{N}
\end{equation}

To recover the pointwise map from Equation~\eqref{T}, a natural approach is to substitute the functional map using its pointwise representation as given in Equation~\eqref{C}. Unfortunately, it will lead to a trivial solution. So we adopt the half-quadratic splitting strategy, replacing only one of the functional maps instead of both, as done in~\cite{Melzi2019}. This results in two decoupled subproblems that can be solved more effectively.
\begin{equation}
	\label{T1}
	\min_{\Pi_{\mathcal{MN}}}\sum_{l=1}^{L}\left\|\Phi^\dagger_{\mathcal{M}}\Pi_{\mathcal{MN}}\Phi_{\mathcal{N}}\ g(s_l\boldsymbol\Lambda_\mathcal{N} )-g(s_l\boldsymbol\Lambda_\mathcal{M} ) \mathbf{C_{\mathcal{NM}}}\right\|_\mathrm{HS}^{2}
\end{equation}
\begin{equation}
	\label{C1}
	\min_{\mathbf{C}_{\mathcal{NM}}}\left\|\mathbf{C}_{\mathcal{NM}}-\Phi^\dagger_{\mathcal{M}}\Pi_{\mathcal{MN}}\Phi_{\mathcal{N}}\right\|_\mathrm{HS}^{2}
\end{equation}
The solution to Equation~\eqref{C1} is straightforward,  $\mathbf{C}_{\mathcal{NM}}=\Phi^\dagger_{\mathcal{M}}\Pi_{\mathcal{MN}}\Phi_{\mathcal{N}}$. In other words, given the pointwise map, the corresponding functional map can be derived. Obtaining the pointwise map from Equation~\eqref{T1} needs more rigorous logical formulation, which is elaborated carefully in the next section.

\subsection{Solution of the Pointwise Map}
\label{Solution of the pointwise map}
Owing to the multi-scale analysis of wavelets, Equation~\eqref{T1} has stronger constraints on the pointwise map, and we can get the following equation at each scale $s_{l}$.
\begin{equation}
	\label{m}
	\Phi^\dagger_{\mathcal{M}}\Pi_{\mathcal{MN}}\Phi_{\mathcal{N}}\ g(s_l\boldsymbol\Lambda_\mathcal{N} )=g(s_l\boldsymbol\Lambda_\mathcal{M} ) \mathbf{C_{\mathcal{NM}}}
\end{equation}
Using the tight wavelet frame~\cite{Ling2021}, which satisfies $\sum_{l=1}^Lg(s_l\lambda)^2 \equiv 1$, where $\lambda$ represents the eigenvalue of the LBO or ELA, we can reformulate the optimization problem for the pointwise map as follows: 
\begin{equation}
	\label{TT2}
	\min_{\Pi_{\mathcal{MN}}} \left\| 
	\Phi^{\dagger}_{\mathcal{M}}\Pi_{\mathcal{MN}} \Phi_{\mathcal{N}} 
	- \sum_{l=1}^{L} g(s_l \boldsymbol\Lambda_{\mathcal{M}}) \mathbf{C}_{\mathcal{NM}} g(s_l \boldsymbol\Lambda_{\mathcal{N}})
	\right\|_\mathrm{HS}^2.
\end{equation}

\begin{proof}
	Given the multi-scale superimposed constraints based on Equation~\eqref{m}:  $\sum_{l=1}^{L} \Phi^\dagger_{\mathcal{M}}\Pi_{\mathcal{MN}}\Phi_{\mathcal{N}} g(s_l\boldsymbol\Lambda_\mathcal{N} )=\sum_{l=1}^{L} g(s_l\boldsymbol\Lambda_\mathcal{M} ) \mathbf{C_{\mathcal{NM}}}$, then multiply $g(s_l\boldsymbol\Lambda_\mathcal{N} )$ on both sides, $\sum_{l=1}^{L} \Phi^\dagger_{\mathcal{M}}\Pi_{\mathcal{MN}}\Phi_{\mathcal{N}}g^{2}(s_l\boldsymbol\Lambda_\mathcal{N} )=\sum_{l=1}^{L} g(s_l\boldsymbol\Lambda_\mathcal{M} ) \mathbf{C_{\mathcal{NM}}}g(s_l\boldsymbol\Lambda_\mathcal{N} )$. Using $\sum_{l=1}^Lg(s_l\lambda)^2 \equiv 1$, the constraint on the pointwise map satisfies the following equation: 
	$\Phi^{\dagger}_{\mathcal{M}}\Pi_{\mathcal{MN}} \Phi_{\mathcal{N}} 
	= \sum_{l=1}^{L} g(s_l \boldsymbol\Lambda_{\mathcal{M}}) \mathbf{C}_{\mathcal{NM}} g(s_l \boldsymbol\Lambda_{\mathcal{N}})$.
\end{proof}

For notational simplicity, we define $\mathbf{C^{\wedge}_{\mathcal{NM}}}=\sum_{l=1}^{L} g(s_l \boldsymbol{\Lambda}_\mathcal{M})
\mathbf{C_{\mathcal{NM}}} g(s_l \boldsymbol{\Lambda}_\mathcal{N})$ . Notably, Equation~\eqref{TT2} only constrains the image of the pointwise map $\Pi_{\mathcal{MN}}$ within the basis space $\Phi_{\mathcal{M}}$. However, in a complete space, this constraint alone is insufficient. To address this, we introduce an additional regularization term to restrain the pointwise map based on the foundational lemmas proposed in ~\cite{hartwig2023elastic, bastian2024hybrid}. Before proceeding, we first present the following foundational lemmas~\cite{hartwig2023elastic, bastian2024hybrid}.

\begin{lemma}
	\label{HS}
	Let $\mathbf{F}\in\mathbb{R}^{n,m}$ with $n,m>0$  be a linear operator between two finite-dimensional Hilbert spaces, and the corresponding Hilbert–Schmidt norm, then
	:\\
	(a) for all injective $\Phi_{k}\in\mathbb{R}^{n,k},k>0$
	\begin{equation}
		\left\| 
		\mathbf{F}
		\right\|_\mathrm{HS}^2= \left\| 
		\Phi_{k}\Phi_{k}^{\dagger}\mathbf{F}
		\right\|_\mathrm{HS}^2+\left\| 
		(\mathbf{I}-\Phi_{k}\Phi_{k}^{\dagger})\mathbf{F}
		\right\|_\mathrm{HS}^2
	\end{equation}
	
	(b) and for all injective $\Phi_k\in\mathbb{R}^{m,k},k>0$
	\begin{equation}
		\left\| 
		\mathbf{F}
		\right\|_\mathrm{HS}^2= \left\| 
		\Phi_{k}\Phi_{k}^{\dagger}\mathbf{F}
		\right\|_\mathrm{HS}^2+\left\| 
		(\mathbf{I}-\Phi_{k}\Phi_{k}^{\dagger})\mathbf{F}
		\right\|_\mathrm{HS}^2
	\end{equation}
\end{lemma}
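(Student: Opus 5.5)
The plan is to recognise both identities as Pythagorean decompositions with respect to an orthogonal projector. I write the Hilbert-space inner products on $\mathbb{R}^n$ and $\mathbb{R}^m$ as $\langle x,y\rangle = x^\mathrm{T}\mathbf{M}y$, with $\mathbf{M}$ symmetric positive definite; in the discrete setting this is, e.g., the mass matrix, and the Euclidean case is $\mathbf{M}=\mathbf{I}$. The Hilbert--Schmidt norm is then $\|\mathbf{F}\|_\mathrm{HS}^2=\mathrm{tr}(\mathbf{F}^\ast\mathbf{F})$, where $\mathbf{F}^\ast$ is the adjoint for these inner products.

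The pseudo-inverse is understood relative to the same inner product: $\Phi_k^\dagger=(\Phi_k^\mathrm{T}\mathbf{M}\Phi_k)^{-1}\Phi_k^\mathrm{T}\mathbf{M}$. Injectivity of $\Phi_k$ is exactly what makes the Gram matrix $\Phi_k^\mathrm{T}\mathbf{M}\Phi_k$ invertible, so this expression is well defined and $\Phi_k^\dagger\Phi_k=\mathbf{I}$. This is the point I expect to need the most care. If $\Phi^\dagger$ were taken as the Euclidean Moore--Penrose inverse while the norm is $\mathbf{M}$-weighted, the projector would be oblique and the identity would fail. So the first step is to fix this convention consistently with how $\Phi^\dagger$ is used earlier, where $\Phi^\dagger\Phi=\mathbf{I}$ is invoked.

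With $\mathbf{P}=\Phi_k\Phi_k^\dagger$, the next step is to verify two facts directly from the formula.
\begin{itemize}
\item Idempotence: $\mathbf{P}^2=\mathbf{P}$, which follows from $\Phi_k^\dagger\Phi_k=\mathbf{I}$.
\item Self-adjointness: $\langle \mathbf{P}x,y\rangle=\langle x,\mathbf{P}y\rangle$, because $\mathbf{M}\mathbf{P}=\mathbf{M}\Phi_k(\Phi_k^\mathrm{T}\mathbf{M}\Phi_k)^{-1}\Phi_k^\mathrm{T}\mathbf{M}$ is symmetric.
\end{itemize}
Hence $\mathbf{I}-\mathbf{P}$ is also a self-adjoint idempotent, and $\mathbf{P}^\ast(\mathbf{I}-\mathbf{P})=\mathbf{P}-\mathbf{P}^2=0$.

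For part (a), expand $\mathbf{F}=\mathbf{P}\mathbf{F}+(\mathbf{I}-\mathbf{P})\mathbf{F}$. This gives
\[
\|\mathbf{F}\|_\mathrm{HS}^2=\|\mathbf{P}\mathbf{F}\|_\mathrm{HS}^2+\|(\mathbf{I}-\mathbf{P})\mathbf{F}\|_\mathrm{HS}^2+2\,\mathrm{tr}\big(\mathbf{F}^\ast\mathbf{P}^\ast(\mathbf{I}-\mathbf{P})\mathbf{F}\big),
\]
and the cross term vanishes by the relation above. Equivalently, one can argue column by column: each column of $\mathbf{F}$ splits orthogonally into its component in $\mathrm{range}(\Phi_k)$ and the orthogonal complement.

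For part (b), the dimensions as written only make sense if the projector acts on the domain side, i.e. with $\Phi_k\in\mathbb{R}^{m,k}$ the claim is
\[
\|\mathbf{F}\|_\mathrm{HS}^2=\|\mathbf{F}\Phi_k\Phi_k^\dagger\|_\mathrm{HS}^2+\|\mathbf{F}(\mathbf{I}-\Phi_k\Phi_k^\dagger)\|_\mathrm{HS}^2 .
\]
I will state and prove this version. The proof reduces (b) to (a) via invariance of the Hilbert--Schmidt norm under taking adjoints, $\|\mathbf{A}\|_\mathrm{HS}=\|\mathbf{A}^\ast\|_\mathrm{HS}$. Applying (a) to $\mathbf{F}^\ast$ with the self-adjoint projector $\mathbf{P}$ on $\mathbb{R}^m$ gives the splitting for $\mathbf{P}\mathbf{F}^\ast$ and $(\mathbf{I}-\mathbf{P})\mathbf{F}^\ast$. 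Taking adjoints back, and using $\mathbf{P}^\ast=\mathbf{P}$, yields the two terms in the displayed identity.
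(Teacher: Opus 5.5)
Your proof is correct. The paper gives no proof of this lemma to compare against: it imports the statement from \cite{hartwig2023elastic, bastian2024hybrid} and uses only part (a), in the step leading to \eqref{T3}. Your argument is the standard one: show that $\mathbf{P}=\Phi_k\Phi_k^{\dagger}$ is idempotent and self-adjoint for the inner product defining the Hilbert--Schmidt norm, then kill the cross term.

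It adds two things the paper's bare statement lacks. First, you are right that (b) as printed does not typecheck. $\Phi_k\Phi_k^{\dagger}\in\mathbb{R}^{m,m}$ cannot multiply $\mathbf{F}\in\mathbb{R}^{n,m}$ from the left unless $n=m$, so the right-multiplied identity you prove is evidently the intended statement. Your reduction of it to (a), via $\|\mathbf{A}\|_\mathrm{HS}=\|\mathbf{A}^\ast\|_\mathrm{HS}$ and $\mathbf{P}^\ast=\mathbf{P}$, is sound.

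Second, insisting that $\Phi_k^{\dagger}$ be the pseudo-inverse for the same inner product as the norm resolves a real inconsistency in the paper. Its Background section declares $\dagger$ to be the Euclidean Moore--Penrose inverse, yet it motivates the Hilbert--Schmidt norm by anisotropic (mass-weighted) metrics. With mismatched conventions the projector is oblique and the identity genuinely fails. Taking $\mathbf{M}=\mathbf{I}$ recovers the Euclidean reading as a special case.

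One small wording point concerns your column-by-column alternative in the weighted setting. There $\|\mathbf{F}\|_\mathrm{HS}^2=\mathrm{tr}(\mathbf{M}_{\mathrm{dom}}^{-1}\mathbf{F}^\mathrm{T}\mathbf{M}\mathbf{F})$ involves inner products between different columns, not just their individual norms. So you should apply the splitting to $\mathbf{F}e_i$ for an orthonormal basis $e_i$ of the domain. Alternatively, note that $\mathbf{P}^\mathrm{T}\mathbf{M}(\mathbf{I}-\mathbf{P})=0$ makes the whole Gram matrix $\mathbf{F}^\mathrm{T}\mathbf{M}\mathbf{F}$ split, so any trace of it splits.
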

Leveraging the above lemma, we convert the constraint problem on a pointwise map into the following form:
\begin{equation}
	\label{T3}
	\min_{\Pi_{\mathcal{MN}}} \left\| 
	\Pi_{\mathcal{MN}} \Phi_{\mathcal{N}} 
	-\Phi_{\mathcal{M}}\mathbf{C}^{\wedge}_{\mathcal{NM}} 
	\right\|_\mathrm{HS}^2
\end{equation}

\begin{proof}
	It is clear that 
	$\min_{\Pi_{\mathcal{MN}}}\left\| \Phi^{\dagger}_{\mathcal{M}}
	\Pi_{\mathcal{MN}} \Phi_{\mathcal{N}} 
	- \mathbf{C^{\wedge}_{\mathcal{NM}}}
	\right\|_\mathrm{HS}^2$ is  equivalent to $\min_{\Pi_{\mathcal{MN}}}\left\| \Phi_{\mathcal{M}}\Phi^{\dagger}_{\mathcal{M}}
	\Pi_{\mathcal{MN}} \Phi_{\mathcal{N}} 
	- \Phi_{\mathcal{M}}\mathbf{C^{\wedge}_{\mathcal{NM}}}
	\right\|_\mathrm{HS}^2$. Using the $\Phi^{\dagger}_{\mathcal{M}}\Phi_{\mathcal{M}}=\mathbf{I}$, a new optimization objective is obtained by incorporating another regularization term into the original one, \textit{i.e.}, $\left\| \Phi_{\mathcal{M}}\Phi^{\dagger}_{\mathcal{M}}
	(\Pi_{\mathcal{MN}} \Phi_{\mathcal{N}} 
	- \Phi_{\mathcal{M}}\mathbf{C^{\wedge}_{\mathcal{NM}}})
	\right\|_\mathrm{HS}^2 +\left\| (\mathbf{I}-\Phi_{\mathcal{M}}\Phi^{\dagger}_{\mathcal{M}})
	(\Pi_{\mathcal{MN}} \Phi_{\mathcal{N}} 
	- \Phi_{\mathcal{M}}\mathbf{C^{\wedge}_{\mathcal{NM}}})
	\right\|_\mathrm{HS}^2=\left\|
	\Pi_{\mathcal{MN}} \Phi_{\mathcal{N}} 
	- \Phi_{\mathcal{M}}\mathbf{C^{\wedge}_{\mathcal{NM}}}
	\right\|_\mathrm{HS}^2$. Therefore, Equation~\eqref{T3} yields.
\end{proof}

We additionally introduce the property of the shape difference operator~\cite{hartwig2023elastic} to further strengthen the constraint in Equation~\eqref {T3}. The shape difference operator is defined as the product of the functional map and its adjoint operator ($\mathbf{C}_{\mathcal{NM}}\mathbf{C}^\ast _{\mathcal{NM}}$) and the deviation of the shape difference operator from the identity reflects disparities in area distortion, where $\mathbf{C}^\ast _{\mathcal{NM}}$ is the adjoint operator of the functional map $\mathbf{C}_{\mathcal{NM}}$. The definition of the adjoint operator relies on the following lemma\ref{addd} and lemma\ref{ad}. 

\begin{lemma}
	\label{addd}
	For functions $f$,$g$ in $\langle\Phi_k\rangle$, where $x,y\in\mathbb{R}^k$ are the corresponding basis coefficients, namely $f=\Phi_{k}x$ and $g=\Phi_{k}y$, we obtain $\langle f,g\rangle_\mathbf{M}=x^\mathrm{T}\mathbf{M}_ky$. Here, $\mathbf{M}_k=\Phi_k^\mathrm{T}\mathbf{M}\Phi_k\in\mathbb{R}^{k,k}$ is the mass matrix with respect to the reduced basis $\Phi_{k}$.
	where $\mathbf{M}$ is the area matrix on the manifold, and $\Phi_{k}$ denotes the truncated basis matrix.
\end{lemma}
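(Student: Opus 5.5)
The plan is to compute the $\mathbf{M}$-weighted inner product directly, using the substitutions $f=\Phi_k x$ and $g=\Phi_k y$. The inner product on $\mathcal{L}^2$ of the discretized manifold is taken to be the one induced by the area (mass) matrix. That is, for vertex functions $f,g\in\mathbb{R}^n$ we set $\langle f,g\rangle_\mathbf{M}=f^\mathrm{T}\mathbf{M}g$, which is the standard discretization of $\int f g\,dA$. This is the only definition needed.

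The steps are then purely algebraic.

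First, substitute the coefficient representations to get $\langle f,g\rangle_\mathbf{M}=(\Phi_k x)^\mathrm{T}\mathbf{M}(\Phi_k y)$.

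Second, use $(\Phi_k x)^\mathrm{T}=x^\mathrm{T}\Phi_k^\mathrm{T}$ together with associativity of matrix multiplication to regroup this as $x^\mathrm{T}(\Phi_k^\mathrm{T}\mathbf{M}\Phi_k)y$.

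Third, identify the bracket with the definition of the reduced mass matrix $\mathbf{M}_k=\Phi_k^\mathrm{T}\mathbf{M}\Phi_k$. This yields $\langle f,g\rangle_\mathbf{M}=x^\mathrm{T}\mathbf{M}_k y$.

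I would also record two remarks, since the subsequent adjoint construction relies on $\mathbf{M}_k$ being a genuine inner-product matrix on coefficient space:
\begin{itemize}
\item $\mathbf{M}_k$ is symmetric, because $\mathbf{M}$ is symmetric.
\item $\mathbf{M}_k$ is positive definite whenever $\mathbf{M}$ is positive definite and $\Phi_k$ is injective, as assumed in Lemma~\ref{HS}. Indeed, $x^\mathrm{T}\mathbf{M}_k x=\|\Phi_k x\|_\mathbf{M}^2>0$ for $x\neq 0$.
\end{itemize}

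For the hybrid basis $\Phi=[\Phi^{\mathrm{LBO}}\ \Phi^{\mathrm{ELA}}]$, injectivity amounts to the concatenated columns being linearly independent. This holds generically, and should be stated as an assumption.

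The main obstacle is not computational. It is making explicit the convention $\langle f,g\rangle_\mathbf{M}=f^\mathrm{T}\mathbf{M}g$, and noting that $\mathbf{M}_k$ need not equal $\mathbf{I}$. The LBO eigenvectors are $\mathbf{M}$-orthonormal, but the ELA basis, and a fortiori the hybrid concatenation, generally is not. So the reduced mass matrix must be retained rather than silently replaced by the identity.
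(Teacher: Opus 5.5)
Your direct substitution of $f=\Phi_k x$, $g=\Phi_k y$ into $\langle f,g\rangle_\mathbf{M}=f^\mathrm{T}\mathbf{M}g$, followed by regrouping, is correct and is the computation the paper relies on. The paper states the lemma without a separate proof, importing it from Hartwig et al.\ and Bastian et al., and uses the same bilinear-form expansion in its proof of Lemma~\ref{ad}; your remarks on the symmetry and positive definiteness of $\mathbf{M}_k$ are sound and supply the invertibility of the reduced mass matrix that the $\mathbf{A}_\mathcal{M}^{-1}$ in Lemma~\ref{ad} tacitly needs.
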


\begin{lemma}
	\label{ad}
	Given the functional map $\mathbf{C}_{\mathcal{NM}}$ . The corresponding adjoint operator can be represented by: $\mathbf{C}^\ast _{\mathcal{NM}}=\mathbf{A}^{-1}_{\mathcal{M}}\mathbf{C}^\mathrm{T}_{\mathcal{NM}}\mathbf{A}_{\mathcal{N}}$, where $\mathbf{A}_{\mathcal{N}}$ and $\mathbf{A}_{\mathcal{M}}$ are the mass matrices with respect to the reduced basis $\Phi_{k}$ based on the shape $\mathcal{M}$ and $\mathcal{N}$ respectively.
\end{lemma}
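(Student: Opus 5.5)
The plan is to derive the formula for $\mathbf{C}^\ast_{\mathcal{NM}}$ directly from the defining property of the adjoint, using Lemma~\ref{addd} to express both inner products in reduced coordinates. Recall that $\mathbf{C}_{\mathcal{NM}}$ maps coefficient vectors of functions in $\langle\Phi_\mathcal{N}\rangle$ to coefficient vectors of functions in $\langle\Phi_\mathcal{M}\rangle$. Its adjoint $\mathbf{C}^\ast_{\mathcal{NM}}$ must therefore act in the opposite direction and satisfy
\[
\langle \Phi_\mathcal{M}\mathbf{C}_{\mathcal{NM}}x,\ \Phi_\mathcal{M}y\rangle_{\mathbf{M}_\mathcal{M}}=\langle \Phi_\mathcal{N}x,\ \Phi_\mathcal{N}\mathbf{C}^\ast_{\mathcal{NM}}y\rangle_{\mathbf{M}_\mathcal{N}}
\]
for all coefficient vectors $x$ (on $\mathcal{N}$) and $y$ (on $\mathcal{M}$). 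I will fix this orientation convention first, since the placement of $\mathbf{A}_\mathcal{M}$ versus $\mathbf{A}_\mathcal{N}$ in the final formula depends entirely on it.

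Next I apply Lemma~\ref{addd} to each side. The left side becomes $x^\mathrm{T}\mathbf{C}^\mathrm{T}_{\mathcal{NM}}\mathbf{A}_\mathcal{M}y$. The right side becomes $x^\mathrm{T}\mathbf{A}_\mathcal{N}\mathbf{C}^\ast_{\mathcal{NM}}y$. Here $\mathbf{A}_\mathcal{M}=\Phi_\mathcal{M}^\mathrm{T}\mathbf{M}_\mathcal{M}\Phi_\mathcal{M}$ and $\mathbf{A}_\mathcal{N}=\Phi_\mathcal{N}^\mathrm{T}\mathbf{M}_\mathcal{N}\Phi_\mathcal{N}$ are the reduced mass matrices. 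Since the identity holds for all $x$ and $y$, the bilinear forms agree as matrices:
\[
\mathbf{A}_\mathcal{N}\mathbf{C}^\ast_{\mathcal{NM}}=\mathbf{C}^\mathrm{T}_{\mathcal{NM}}\mathbf{A}_\mathcal{M}.
\]

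The remaining step is to invert a mass matrix. Each reduced mass matrix is symmetric positive definite, because $\mathbf{M}$ is positive definite and the basis is injective (the standing assumption in Lemma~\ref{HS}). This gives $\mathbf{C}^\ast_{\mathcal{NM}}=\mathbf{A}_\mathcal{N}^{-1}\mathbf{C}^\mathrm{T}_{\mathcal{NM}}\mathbf{A}_\mathcal{M}$.

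The main obstacle is matching this to the stated form $\mathbf{A}^{-1}_\mathcal{M}\mathbf{C}^\mathrm{T}_{\mathcal{NM}}\mathbf{A}_\mathcal{N}$. That form is what one obtains if $\mathbf{C}_{\mathcal{NM}}$ is read as acting from the $\mathcal{M}$-side coordinates, or equivalently if the roles of the two inner products are swapped. Under that reading, the same computation with $\mathcal{M}$ and $\mathcal{N}$ interchanged yields exactly the stated formula. Concretely, I would write out the pairing with the domain inner product carrying $\mathbf{A}_\mathcal{M}$ and the codomain inner product carrying $\mathbf{A}_\mathcal{N}$. The resulting equation is $\mathbf{A}_\mathcal{M}\mathbf{C}^\ast=\mathbf{C}^\mathrm{T}\mathbf{A}_\mathcal{N}$, and solving it gives the claim.

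I would also remark on a special case. When the bases are mass-orthonormal, as for LBO eigenfunctions, both reduced mass matrices equal $\mathbf{I}$. The adjoint then reduces to the transpose, which is consistent with its use in Algorithm~\ref{power1}.
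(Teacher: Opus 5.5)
Your proposal is correct and is essentially the paper's proof: the paper postulates $\langle \mathbf{C}_{\mathcal{NM}}x,y\rangle_{\mathbf{A}_{\mathcal{N}}}=\langle x,\mathbf{C}^\ast_{\mathcal{NM}}y\rangle_{\mathbf{A}_{\mathcal{M}}}$, which is your ``swapped'' pairing. It then expands both sides as $x^\mathrm{T}\mathbf{C}^\mathrm{T}_{\mathcal{NM}}\mathbf{A}_{\mathcal{N}}y$ and $x^\mathrm{T}\mathbf{A}_{\mathcal{M}}\mathbf{C}^\ast_{\mathcal{NM}}y$ and solves. Your orientation concern is legitimate (read literally, the statement's ``respectively'' assigns $\mathbf{A}_{\mathcal{N}}$ to shape $\mathcal{M}$ and $\mathbf{A}_{\mathcal{M}}$ to shape $\mathcal{N}$, which makes the stated formula coincide with your first derivation), and your positive-definiteness argument for inverting the reduced mass matrix supplies a step the paper leaves implicit.
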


\begin{proof}
	
	After obtaining the mass matrix under the reduced basis, we can derive the adjoint operator of the functional map according to the method in~\cite{hartwig2023elastic}.
	The following equations are defined in~\cite{hartwig2023elastic}.
	\begin{equation}
		\langle \mathbf{C}_{\mathcal{NM}}x,y\rangle_{\mathbf{A}_{\mathcal{N}}}=\langle x,\mathbf{C}_{\mathcal{NM}}^*y\rangle_{\mathbf{A}_{\mathcal{M}}}
	\end{equation}
	
	Where $f$ and $g$ are descriptor functions defined on manifolds $\mathcal{M}$ and $\mathcal{N}$, and $x$ and $y$ are the corresponding basis coefficients under the respective basis function spaces.
	$\langle \mathbf{C}_{\mathcal{NM}}x,y\rangle_{\mathbf{A}_{\mathcal{N}}}=(\mathbf{C}_{\mathcal{NM}}x)^\mathrm{T}\mathbf{A}_{\mathcal{N}}y=x^\mathrm{T}\mathbf{C}_{\mathcal{NM}}^\mathrm{T}\mathbf{A}_{\mathcal{N}}y$\\
	$\langle x,\mathbf{C}_{\mathcal{NM}}^*y\rangle_{\mathbf{A}_{\mathcal{M}}}=x^\mathrm{T}\mathbf{A}_{\mathcal{M}}\mathbf{C}_{\mathcal{NM}}^*y$\\
	$\mathbf{C}_{\mathcal{NM}}^*=\mathbf{A}_{M}^{-1}\mathbf{C}_{\mathcal{NM}}^\mathrm{T}\mathbf{A}_{N}$.
\end{proof}

The functional map in Equation~\eqref{T3} can be regarded as a filtered functional map for eigenvalues.  To fully leverage the advantages of the aforementioned shape difference operator,
we reformulate the optimization problem as shown in Equation~\eqref{TT4} to be consistent with Equation~\eqref{T3}.
\begin{equation}
	\label{TT4}
	\min_{\Pi_{\mathcal{MN}}} \left\| 
	\Pi_{\mathcal{MN}} \Phi_{\mathcal{N}} (\mathbf{C}^{\wedge}_{\mathcal{NM}})^{\ast}
	-\Phi_{\mathcal{M}}
	\right\|_\mathrm{HS}^2
\end{equation}

\begin{proof}
	\begin{equation*}
		\begin{split}
			& \left\| \Pi_{\mathcal{MN}} \Phi_{\mathcal{N}} - \Phi_{\mathcal{M}} (\mathbf{C^{\wedge}_{\mathcal{NM}}})\right\|_{\mathrm{HS}}^{2} \\
			= &\operatorname{tr}\left(\Phi^{\ast}_{\mathcal{N}}\Pi^{\ast}_{\mathcal{MN}}\Pi_{\mathcal{MN}}\Phi_{\mathcal{N}} - \Phi^{\ast}_{\mathcal{N}}\Pi^{\ast}_{\mathcal{MN}}\Phi_{\mathcal{M}}\mathbf{C^{\wedge}_{\mathcal{NM}}} \right. \\
			& \left. - (\mathbf{C^{\wedge}_{\mathcal{NM}}})^{\ast}\Phi^{\ast}_{\mathcal{M}}\Pi_{\mathcal{MN}}\Phi_{\mathcal{N}} + (\mathbf{C^{\wedge}_{\mathcal{NM}}})^{\ast}\Phi^{\ast}_{\mathcal{M}}\Phi_{\mathcal{M}}\mathbf{C^{\wedge}_{\mathcal{NM}}}\right)
		\end{split}
	\end{equation*}
	By applying $\mathrm{tr}(\mathbf{A}+\mathbf{B})=\mathrm{tr}(\mathbf{A})+\mathrm{tr}(\mathbf{B})$, $\mathrm{tr}(\mathbf{AB})=\mathrm{tr}(\mathbf{BA})$ and $(\mathbf{C}^{\wedge}_{\mathcal{NM}})^{\ast}\mathbf{C}^{\wedge}_{\mathcal{NM}}=\mathbf{I}$, 
	it is therefore clear that Equation\eqref{T3} is equal to Equation\eqref{TT4}.
\end{proof}
This final form is advantageous because it can be efficiently solved for the hard correspondence matrix $\Pi_{\mathcal{MN}}$ using a simple nearest-neighbor search.

\subsection{The Complete HWF Algorithm}
Based on the analysis above, our HWF algorithm is an iterative process that alternates between solving the two decoupled subproblems:
\begin{itemize}
    \item \textbf{Update Functional Map: } Given the current pointwise map $\Pi_{\mathcal{MN}}^{(i)}$, compute the updated functional map $\mathbf{C}_{\mathcal{NM}}^{(i)}$ using its projection onto the hybrid basis (solving Eq.~\eqref{C1}). That is, the functional map matrix is obtained as the product of the basis function matrix defined on manifold $\mathcal{M}$  and the basis function matrix defined on manifold $\mathcal{N}$, where the latter is constructed by performing row-index search based on a hard pointwise map.

    \item \textbf{Update Pointwise Map: } Given the updated functional map $\mathbf{C}_{\mathcal{NM}}^{(i)}$, compute the filtered map $\mathbf{C}_{\mathcal{NM}}^{\wedge}$ and sove for the new pointwise map $\Pi_{\mathcal{MN}}^{(i+1)}$via a nearest-neighbor search on Equation~\eqref{TT4}. 
For manifold $\mathcal{M}$, after concatenating the two sets of basis functions column-wise, each row can be regarded as a feature of a vertex. Similarly, for the basis function space of the manifold  $\mathcal{N}$ constructed by the functional map, one can perform column-wise concatenation and then apply nearest-neighbor search on each row to obtain the pointwise map.

\end{itemize}
This process, summarized in Algorithm \ref{power1}, iterates until convergence, synergistically leveraging both intrinsic and extrinsic information to find a robust correspondence.

\section{Additional Results}

\subsection{Matching on Anisotropically Remeshed Shapes}
To evaluate the robustness of our method to different mesh discretizations, we conduct tests on anisotropically remeshed versions of the FAUST (F\_a) and SCAPE (S\_a) datasets. For these tests, we use the models trained on their isotropically remeshed counterparts (F\_r and S\_r, respectively). The results, summarized in Table \ref{F_A}, show that our method achieves competitive performance, securing either the best or second-best results in most generalization scenarios and demonstrating strong resilience to changes in mesh structure.
\begin{table}[h!t]
	\centering
	\scalebox{0.99}{
		\begin{tabular}{l *{4}{r}} 
			\toprule
			Method / Dataset  & \makecell[c]{F\_r/\\ F\_a} &  \makecell[c]{F\_r/\\ S\_a}  & \makecell[c]{S\_r/\\ F\_a}& \makecell[c]{S\_r/\\ S\_a} \\ 
			\midrule
			BCICP &14.0 &8.5 &14.0 &8.5\\
			ZoomOut &15.0 &8.7 &15.0 &8.7 \\
			SmoothShells &5.0 &5.4 &5.0 &5.4\\
			DiscreteOp &14.6 &6.2 &14.6 &6.2\\
			MWP &8.7 &8.2 &8.7&8.2\\
			\midrule
			DeepShells & 12.0&16.0&15.0&10.0\\
			DUOFMNet & 3.0&4.4&3.1& 2.7\\
			AttentiveFMaps & 2.4&2.8&\underline{2.5}&2.3\\
			RFMNet & 3.6&\underline{2.6}&3.6&3.9\\
			ULRSSM & 2.5&8.9 &7.0&\underline{1.9}\\
			HybridFMaps &\textbf{2.0}&8.8&2.6&\textbf{1.8}\\
			DFAFM & \textbf{2.0}&2.9 &2.6&\underline{1.9}\\
			Ours & \underline{2.1} & \textbf{2.0}&\textbf{2.2}&\underline{1.9}\\
			\bottomrule
		\end{tabular}
	}
    \caption{Comparative results on anisotropically remeshed shapes. This table evaluates generalization performance when models trained on isotropically remeshed data are tested on anisotropically remeshed versions of FAUST (F\_a) and SCAPE (S\_a).}
	\label{F_A}
\end{table}

\subsection{Parameter Analysis}
In this section, we analyze the sensitivity of our method to two key hyperparameters: the number of elastic basis functions used in our hybrid refiner and the number of iterations performed by the HWF algorithm.

\textbf{\textit{Number of Elastic Bases.}} Our method leverages both intrinsic (LBO) and extrinsic (ELA) bases. While we follow related work \cite{hu2023rfmnet} for the number of LBO bases, the optimal number of ELA bases requires investigation. We experimented on the SCAPE dataset by varying the number of ELA basis functions. As shown in Figure \ref{num_scape}, the results indicate that using 200 ELA basis functions provides the best trade-off. Fewer bases lead to degraded performance, while a larger number offers diminishing returns at the cost of increased computational overhead.
\begin{figure}[h!t]
	\begin{centering}
		\includegraphics[width=0.96\linewidth]{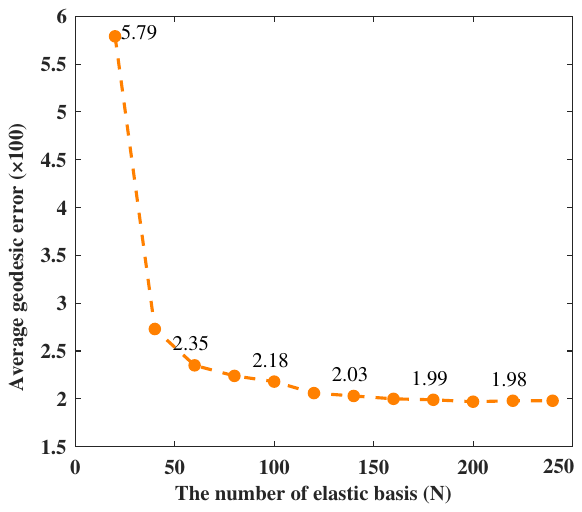}
	\end{centering}
	\caption{Impact of the number of ELA basis functions. This plot shows the geodesic error on the SCAPE dataset as the number of ELA bases used in our refiner HWF is varied.}
	\label{num_scape}
\end{figure}

\begin{figure}[h!t]
	\begin{centering}
		\includegraphics[width=0.96\linewidth]{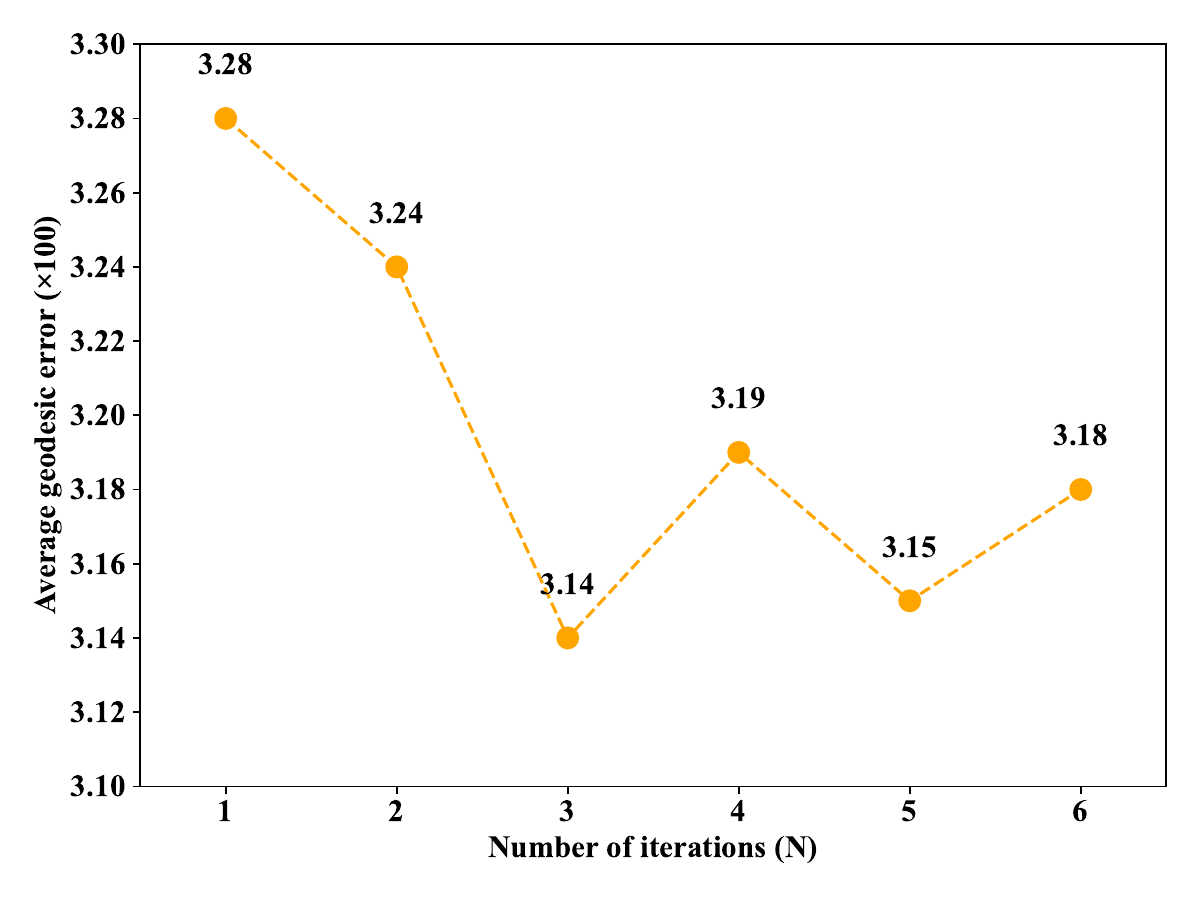}
	\end{centering}
	\caption{Convergence of the HWF refinement algorithm. This plot shows the geodesic error on the SMAL dataset as a function of the number of HWF iterations. The performance saturates after a few iterations.}
	\label{num_iter}
\end{figure}

\textbf{\textit{Number of Refinement Iterations.}} We analyzed the convergence of our HWF iterative algorithm by tracking the geodesic error on the SMAL dataset with an increasing number of iterations. The results are presented in Figure \ref{num_iter}. To balance matching accuracy with computational efficiency, we selected three refinement iterations for all experiments reported in this paper.

\textbf{\textit{Number of Wavelet Filters.}} The use of wavelet filters effectively performs frequency-domain filtering on functional maps, which is crucial for high-quality pointwise map recovery. We performed an ablation experiment on the SMAL dataset to analyze the effect of the number of wavelet filters, and the results in Figure \ref{wa} indicate that employing six filters yields the best performance. Therefore, to balance both accuracy and efficiency, we set the number of filters to six.

\begin{figure}[h!t]
	\begin{centering}
		\includegraphics[width=0.96\linewidth]{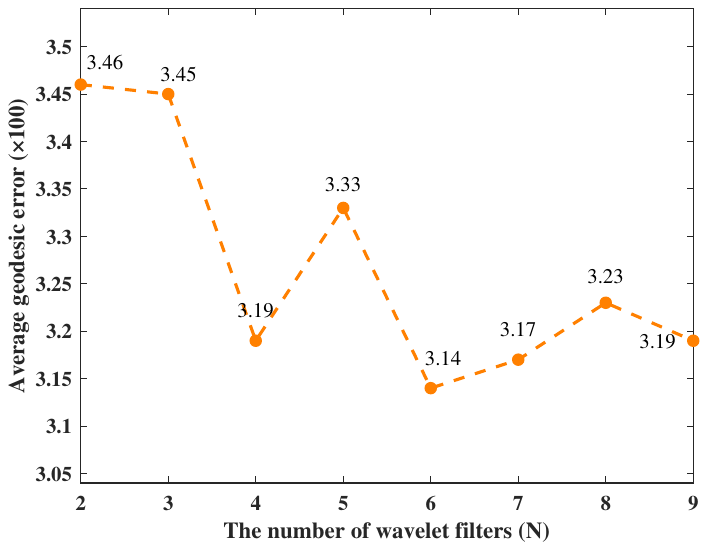}
	\end{centering}
	\caption{Ablation analysis of the number of wavelet filters. The figure illustrates the trend of average geodesic error versus the number of wavelet filters on the SMAL dataset, demonstrating the effectiveness of multi-scale filtering in the HWF refinement algorithm.}
	\label{wa}
\end{figure}

\begin{figure}[h!t]
	\begin{centering}
		\includegraphics[width=0.96\linewidth]{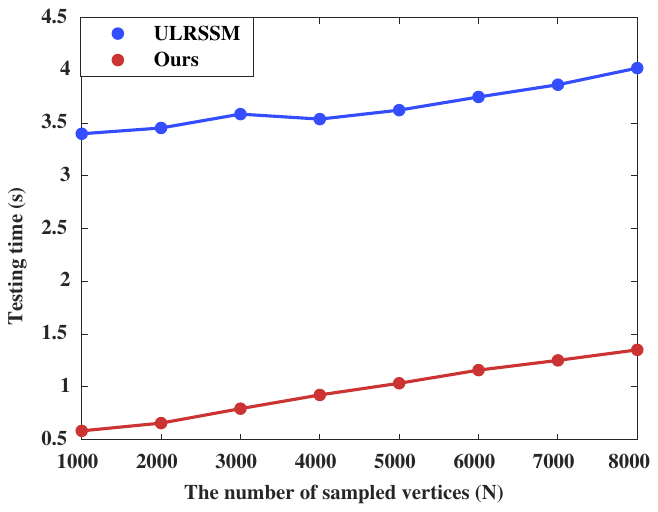}
	\end{centering}
	\caption{Comparison of inference time on the SMAL dataset. The plot shows the inference time versus the number of vertices. The blue line represents ULRSSM, which includes a test-time adaptation (TTA) post-processing step. The red line represents our method, which requires no post-processing and is significantly faster.}
	\label{TTA}
\end{figure}
\subsection{Inference Time Analysis}
Our method is designed to be efficient at test time, as it does not require any costly post-processing modules. To analyze this efficiency, we compare the inference time of our method against ULRSSM \cite{Cao2023Unsupervised}, a state-of-the-art approach that employs a costly test-time adaptation (TTA) post-processing step. In our framework, the final pointwise map is obtained directly from the non-differentiable branch during a single forward pass. Figure \ref{TTA} plots the inference time as a function of the number of vertices. The results clearly show that our method is not only highly accurate but also significantly more efficient at test time.

\clearpage
\begin{figure*}[h!t]
	\begin{centering}
		\includegraphics[width=0.96\linewidth]{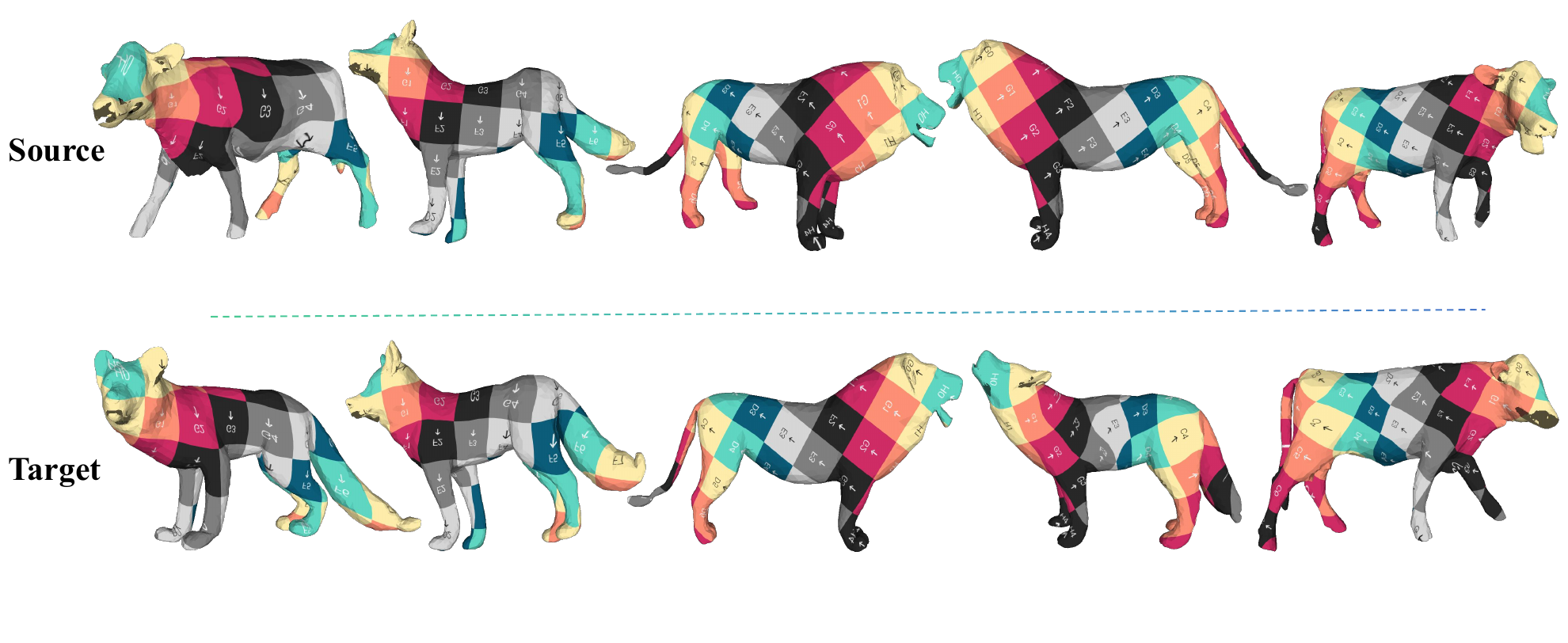}
	\end{centering}
	\caption{Qualitative results on the non-isometric SMAL benchmark. The quality of the texture transfer highlights the accuracy of the correspondence maps generated by our method on these challenging animal shapes.}
	\label{smal_tex}
\end{figure*}

\begin{figure*}[h!t]
	\begin{centering}
		\includegraphics[width=0.96\linewidth]{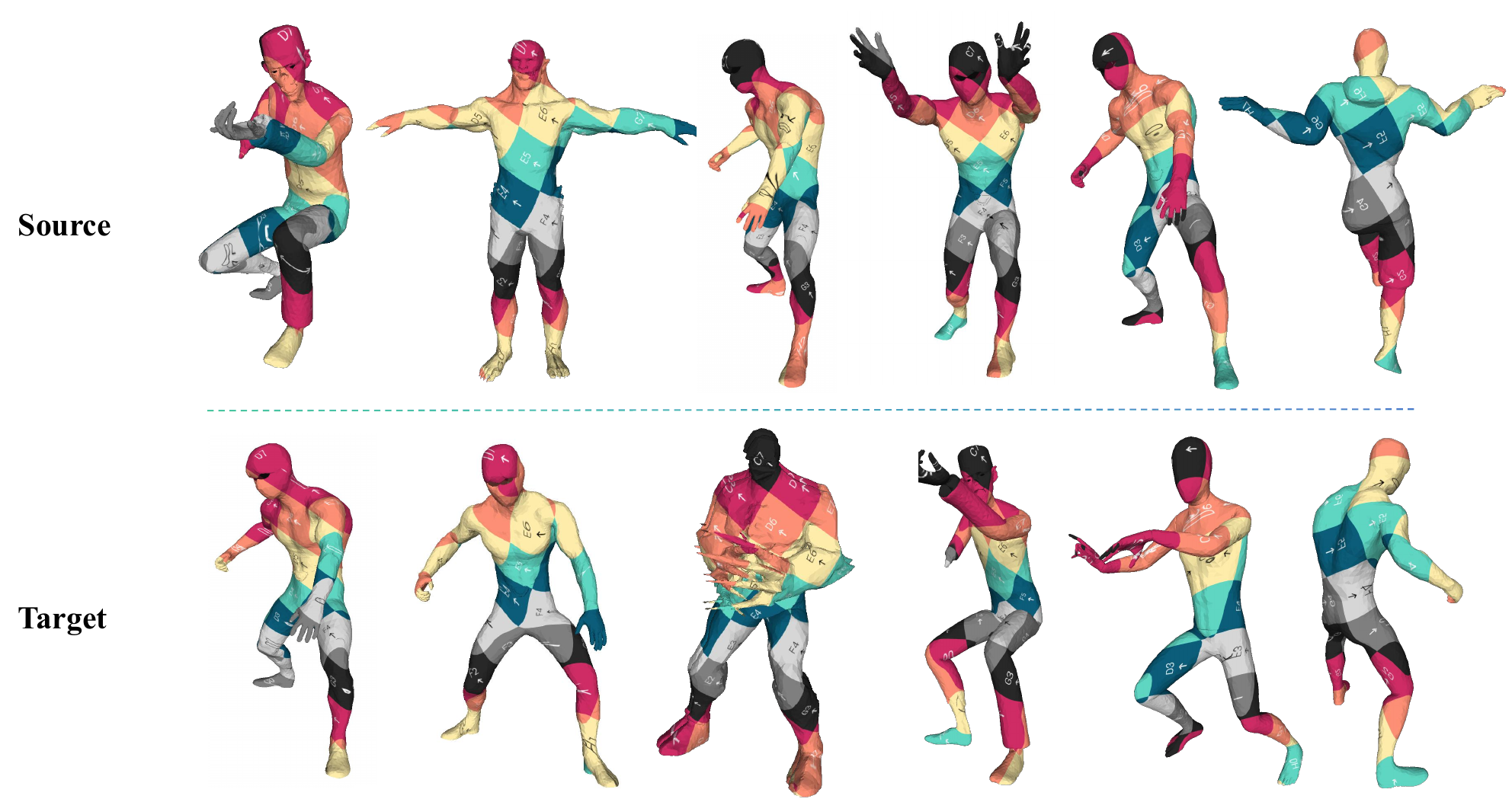}
	\end{centering}
	\caption{Matching results on the DT4D-H dataset. The quality of the texture transfer demonstrates our method's ability to find meaningful correspondences between different classes of humanoid shapes.}
	\label{DT4Dinter}
\end{figure*}
\begin{figure*}[h!t]
	\begin{centering}
		\includegraphics[width=0.96\linewidth]{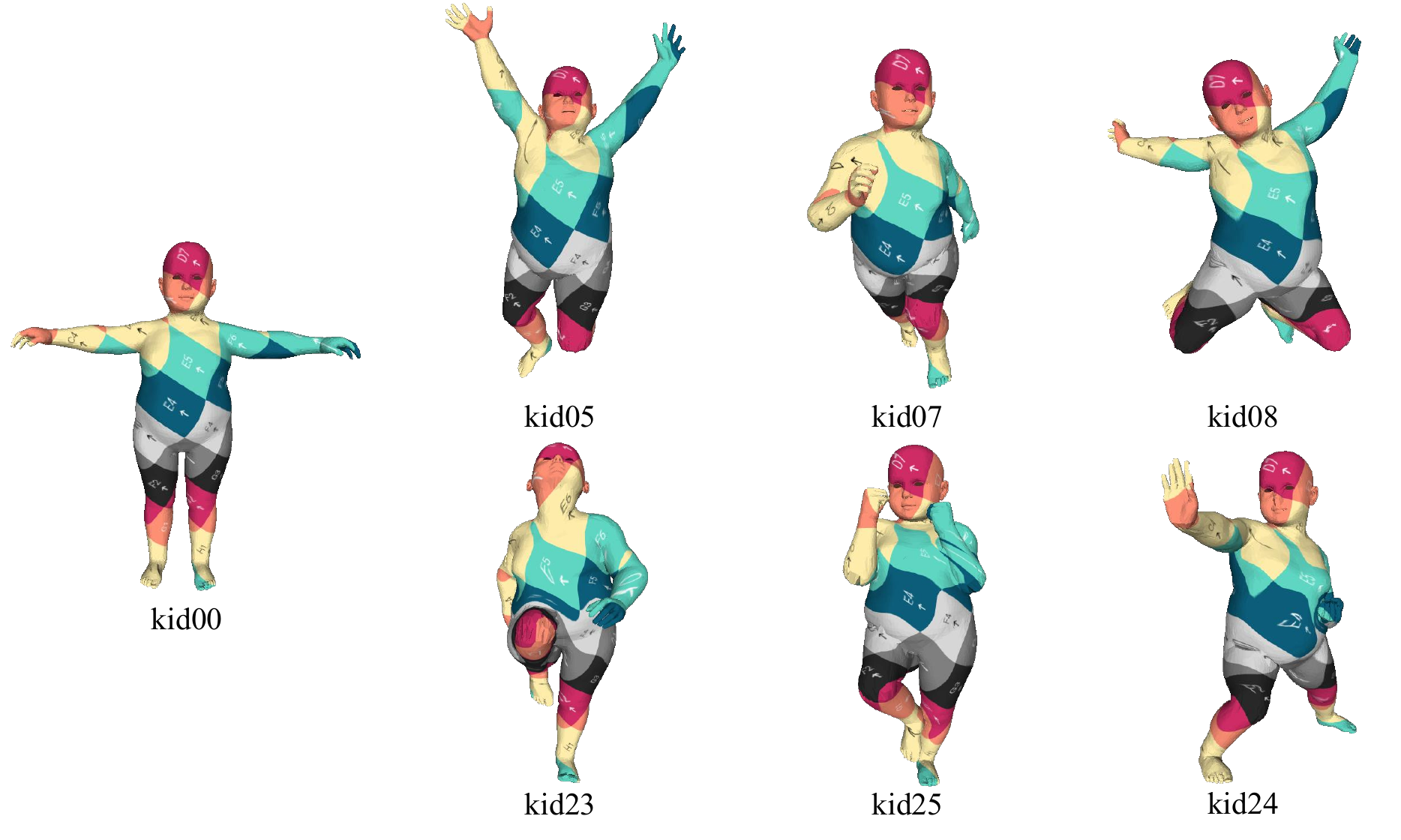}
	\end{centering}
	\caption{Robustness to significant topological noise on the TOPKIDS dataset. The figure visualizes correspondence quality via texture transfer. The leftmost shape (kid00) serves as the source, mapped onto the various target shapes. Our method successfully handles the severe topological artifacts present in the data.}
	\label{topkids}
\end{figure*}

\end{document}